  \providecommand\BibTeX{{%
    \normalfont B\kern-0.5em{\scshape i\kern-0.25em b}\kern-0.8em\TeX}}}
\definecolor{grey}{RGB}{180,180,180}
\begin{document}

\title{Masked Graph Autoencoder with Non-discrete Bandwidths}

\author{Ziwen Zhao}
\orcid{0000-0003-3565-4170}
\affiliation{%
  \department{School of Computer Science and Technology}
  \institution{Huazhong University of Science and Technology}
  \city{Wuhan}
  \country{China}
}
\email{zwzhao@hust.edu.cn}

\author{Yuhua Li}
\orcid{0000-0002-1846-4941}
\authornote{Corresponding author.}
\author{Yixiong Zou}
\orcid{0000-0002-2125-9041}
\affiliation{%
  \department{School of Computer Science and Technology}
  \institution{Huazhong University of Science and Technology}
  \city{Wuhan}
  \country{China}
}
\email{{idcliyuhua,yixiongz}@hust.edu.cn}

\author{Jiliang Tang}
\orcid{0000-0001-7125-3898}
\affiliation{%
  \institution{Michigan State University}
  \city{East Lansing}
  \state{Michigan}
  \country{USA}
}
\email{tangjili@msu.edu}

\author{Ruixuan Li}
\orcid{0000-0002-7791-5511}
\affiliation{%
  \department{School of Computer Science and Technology}
  \institution{Huazhong University of Science and Technology}
  \city{Wuhan}
  \country{China}
}
\email{rxli@hust.edu.cn}


\begin{abstract}
  Masked graph autoencoders have emerged as a powerful graph self-supervised learning method that has yet to be fully explored. In this paper, we unveil that the existing discrete edge masking and binary link reconstruction strategies are insufficient to learn topologically informative representations, from the perspective of message propagation on graph neural networks. These limitations include blocking message flows, vulnerability to over-smoothness, and suboptimal neighborhood discriminability. Inspired by these understandings, we explore non-discrete edge masks, which are sampled from a continuous and dispersive probability distribution instead of the discrete Bernoulli distribution. These masks restrict the amount of output messages for each edge, referred to as ``bandwidths''. We propose a novel, informative, and effective topological masked graph autoencoder using bandwidth masking and a layer-wise bandwidth prediction objective. We demonstrate its powerful graph topological learning ability both theoretically and empirically. Our proposed framework outperforms representative baselines in both self-supervised link prediction (improving the discrete edge reconstructors by at most 20\%) and node classification on numerous datasets, solely with a structure-learning pretext. Our implementation is available at \url{https://github.com/Newiz430/Bandana}. 
\end{abstract}

\begin{CCSXML}
<ccs2012>
   <concept>
       <concept_id>10010147.10010257.10010258.10010260</concept_id>
       <concept_desc>Computing methodologies~Unsupervised learning</concept_desc>
       <concept_significance>500</concept_significance>
       </concept>
   <concept>
       <concept_id>10002951.10003227.10003351</concept_id>
       <concept_desc>Information systems~Data mining</concept_desc>
       <concept_significance>500</concept_significance>
       </concept>
 </ccs2012>
\end{CCSXML}

\ccsdesc[500]{Computing methodologies~Unsupervised learning}
\ccsdesc[500]{Information systems~Data mining}


\keywords{Graph neural networks; Graph self-supervised learning; Masked graph autoencoders}


\maketitle


\section{Introduction}\label{1}

Today, the demand for massive amounts of data in pre-training large models has reached an unprecedented level. 
{\it Self-supervised learning} (SSL) has emerged as a powerful approach to uncovering underlying patterns in unannotated data by pre-training on some tailor-made tasks called {\it pretexts}~\cite{SSL, GSSL}. 
Graphs, unlike text and images, 
possess non-Euclidean structures that are hard for humans to intuitively understand and lack well-annotated graph benchmark datasets due to the diversity of graph data and tasks.
Thus, SSL also plays a pivotal role in learning graph representations, especially in various web applications such as social recommendation~\cite{CGI}.

\emph{Taxonomy.} 
Contemporary graph SSL studies are mainly {\it contrastive methods}~\cite{DGI, GRACE, GCA} that leverage metric learning between augmented data pairs. 
In spite of this, they suffer from
the thorny problem of dimensional collapse. On the other hand, {\it autoencoding methods} 
learn by reconstructing the input data from encoded representations. However, traditional graph autoencoders~\cite{GAE/VGAE} fall short of modeling high-dimensional representation spaces, while variational autoencoders~\cite{GAE/VGAE, ARGA/ARVGA, SIG-VAE} require additional assumptions on data distributions. 
In contrast, {\it masked graph autoencoders}, a novel framework for data reconstruction, enable the learning of high-dimensional representations without extra assumptions and show remarkable adaptability to graph data. 
One type of masked graph autoencoder aims to reconstruct node features, referred to as \textsc{FeatRec}s~\cite{GMAE, GraphMAE, GraphMAE2}. 
Another type, on which our work focuses, aims to reconstruct randomly masked links to learn graph topology, referred to as \textsc{TopoRec}s~\cite{S2GAE, MaskGAE}. 

\emph{Problem.}
In this work, we focus on the topological informativeness of traditional \textsc{TopoRec}s' representations, i.e. {\it how well they embed graph topology into the latent representation space}. 
As illustrated in Figure \ref{fig:bandwidth}(a), traditional \textsc{TopoRec}s rely on two key components: 
(\lowercase{\romannumeral 1}) {\it discrete edge masking}, where binary edge masks are sampled from a discrete distribution, and (\lowercase{\romannumeral 2}) {\it binary link reconstruction}, which distinguishes the masked positive edges from negative ones. 
Despite some prominent results~\cite{S2GAE, MaskGAE, SeeGera} derive from these two strategies, 
we argue that {\it discrete random masking and binary link reconstruction lead to limited informativeness, both globally and locally} (in Section \ref{3.3}). 
(\lowercase{\romannumeral 1}) Globally, pathways for long-range information are likely to be stretched or blocked by indiscriminate masking, 
leading to the vulnerability to over-smoothing.
(\lowercase{\romannumeral 2}) Locally, 
discrete masking cannot provide fine-grained neighborhood discriminability, leading to suboptimal topological learning performance. 

\begin{figure}
    \centering
    \includegraphics[width=0.96\columnwidth]{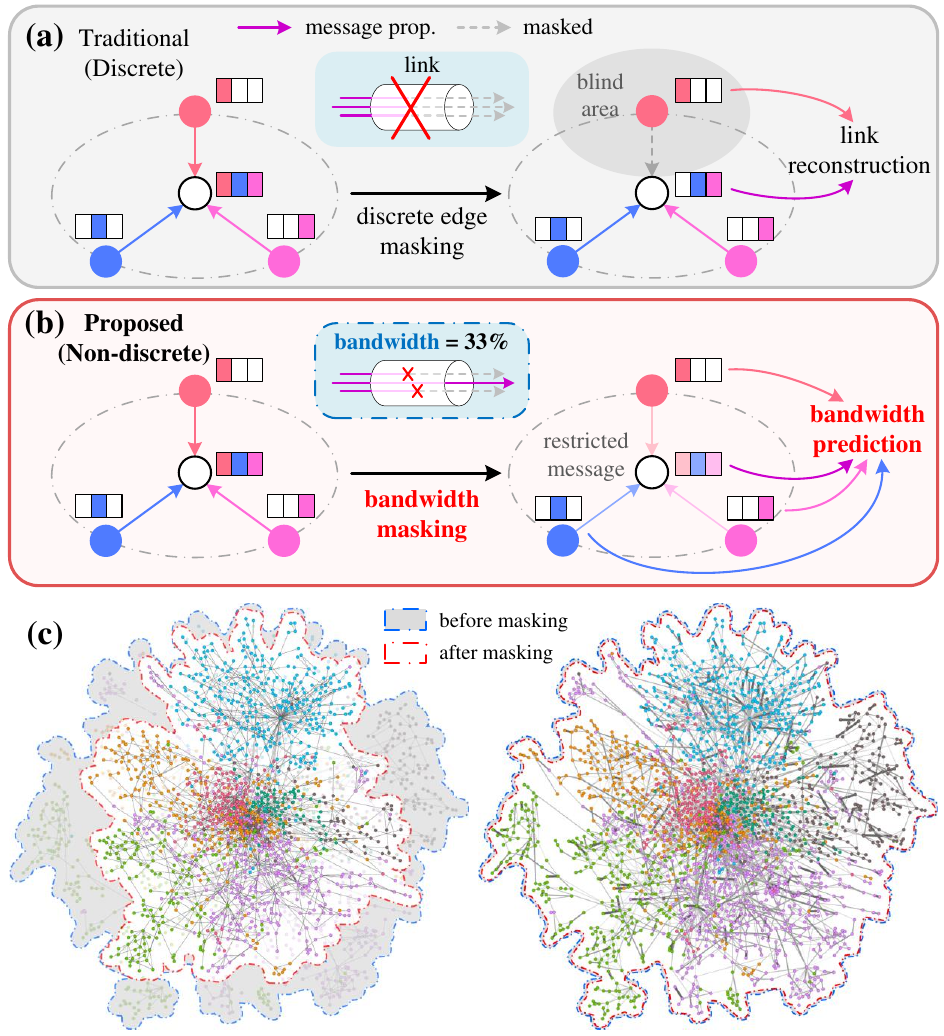}
    \caption{
    Discrete masks vs. the proposed bandwidths. \textmd{{\bf(a)} Traditional \textsc{TopoRec}s randomly mask a fixed proportion of edges and try to reconstruct them. 
    However, messages from some neighboring nodes (e.g. the \textcolor{red}{red} one) as well as their predecessors will not be received by the target node (white). 
    {\bf(b)} 
    We propose bandwidth masking and prediction, which first restricts the message propagated through each edge in varying degrees and then predicts how much it is restricted. The white node can now receive messages from every neighbor.
    {\bf(c)} The connected component of Cora~\protect\cite{Planetoid} \textcolor{blue}{before} and \textcolor{red}{after} different masking schemes. {\bf Left}: discretely masked graph breaks the connectivity of the original component, whereas {\bf right}: bandwidth masked graph (where the width and grayscale of each edge denote the assigned bandwidth) keeps the original graph topology intact, so the reconstructor learns more topologically informative representations.
    Best viewed in color.
    }}
    \label{fig:bandwidth}
\end{figure}

\emph{Present work.}
To tackle the problems, a topologically informative non-discrete masking strategy is desired. We introduce a new perspective by considering the message propagation of a GNN analogously as the (transient) transmission between nodes in a telecommunication network. By randomly setting a limit for each connection link (edge) on the amount of messages transferred in one single propagation step, named the ``{\it bandwidth}'', we manage to {\it mask a portion of information through each edge}, as shown in Figure \ref{fig:bandwidth}(b). Bandwidths provide topological informativeness both globally and locally: 
(\lowercase{\romannumeral 1}) the graph topology is kept intact during the training (Figure \ref{fig:bandwidth}(c)), and long-range information can reach its destination unimpeded, which provides {\it global informativeness}. 
(\lowercase{\romannumeral 2}) By sampling bandwidth values from a dispersive Boltzmann-Gibbs distribution, each node learns its neighborhood in a more fine-grained and discriminative way, which provides {\it local informativeness}. Accordingly, we propose a novel masked graph autoencoder with bandwidth prediction instead of link reconstruction. It is termed ``Graph \underline{a}utoe\underline{n}coder \underline{a}ided by \underline{Band}widths'', \texttt{Bandana} in reverse. We showcase \texttt{Bandana}'s great topological informativeness both theoretically (Section \ref{4.2}) and empirically (Section \ref{5}) by conducting extensive experiments on numerous datasets. 
We present the following main contributions:
\begin{itemize}[leftmargin=*]
    \item We unveil that discrete \textsc{TopoRec}s are insufficient to learn topologically informative representations. Globally, {\it blocked message flows} make the \textsc{TopoRec} vulnerable to the over-smoothing problem of deeper GNNs; locally, uniformized weight distribution results in the {\it indiscriminative neighborhood}.
    \item We explore a non-discrete masking mechanism in masked autoencoders, named ``{\it bandwidths}''. We establish a theoretical relationship between our bandwidth mechanism and regularized denoising autoencoders to prove its informativeness.
    \item We propose \texttt{Bandana}, a novel graph self-supervised learning framework that learns topologically informative representations. It outperforms representative baselines in both link prediction and node classification solely with a structure-learning pretext.
\end{itemize}

\section{Related work}\label{2}

In this section, we briefly review prior work on graph self-supervised learning and compare their advantages and disadvantages.


\emph{Contrastive Learning}~\cite{CL} was born initially for the visual domain~\cite{DIM, MoCo}. The most popular contrastive learning framework~\cite{SimCLR} feeds augmented data pairs into two shared-weight neural networks. Then it 
computes pairwise similarities between positive and negative samples 
by InfoNCE contrastive loss~\cite{InfoNCE}.
A flood of prominent work has appeared since contrastive learning was introduced to the graph domain~\cite{GRACE, GCA, CCA-SSG, COSTA}.
However, they must face the serious problem of representation collapse
, that is, the output of the encoder will degenerate to a scalar independent of the input.
In addition, InfoNCE itself does not provide the power of learning graph structures, because it only measures the distance in the feature space. Therefore, contrastive learning methods rarely discuss the generalizability to structural tasks like link prediction unless they are specifically designed~\cite{T-BGRL}.

\emph{Autoencoding}, another line of work, 
encodes the graph into a latent space via GNNs and then decodes the representations to reconstruct the original features or structure. The pioneering work of GAE~\cite{GAE/VGAE} stimulated the research of traditional graph autoencoders~\cite{ARGA/ARVGA}. However, an overcomplete autoencoder, whose dimension of the representation space is no less than that of the data space, may degenerate into an identity map, 
severely limiting its expressive power. 
Graph variational autoencoders~\cite{GAE/VGAE, ARGA/ARVGA, SIG-VAE, SeeGera} learn prior variational distributions of latent representations to obtain better latent spaces and generate new representations from them. 
Nevertheless, they still induce a suboptimal latent space because of the simplistic prior assumption~\cite{SIG-VAE}. 

\emph{Mask Modeling}. The ``mask-then-reconstruct'' scheme has already been adopted to model natural language, computer vision, etc., and has achieved great success~\cite{BERT, MAE}.
Masked autoencoders (MAEs) have been introduced into the graph domain just recently~\cite{S2GAE}. For \textsc{FeatRec}s, the most well-known of which is the \texttt{GraphMAE} series~\cite{GraphMAE, GraphMAE2} that leverages the re-mask mechanism and the scaled cosine error loss.
Yet, they are not suitable for link prediction due to the neglect of graph topology. 
On the other hand, \textsc{TopoRec}s learn by discrete random edge masks and link reconstruction, the most well-known of which are \texttt{S2GAE} (formerly \texttt{MGAE})~\cite{S2GAE} and \texttt{MaskGAE}~\cite{MaskGAE}. 
\texttt{S2GAE} resorts to a cross-correlation decoder to capture the information lost by perturbation, and \texttt{MaskGAE} employs path masks and another degree regression decoder. 
Despite being empirically beneficial to performance improvement, there are no theoretical guarantees that these strategies can induce a better topological learner.
Our work aims to bridge this gap. 

\section{Preliminaries}\label{3}

In this section, we illustrate the principle of message propagation (Section \ref{3.1}) and \textsc{TopoRec}s (Section \ref{3.2}). Then, we discuss the problems of discrete masking and link reconstruction (Section \ref{3.3}).

\subsection{Notations and Concepts}\label{3.1}

We use different types of one specific symbol $S$ to denote different forms of one mathematical object. A {\bf bold} symbol $\mathbf{S}$ denotes a matrix, with its $j$th column in \textbf{\textit{bold italics}} $\boldsymbol{S}_j = \mathbf{S}_{:,j}$. The element at the $i$th row and $j$th column is in {\it italics} with subscripts $S_{ij}$. 

Let $\mathcal{G} = (\mathbf{X}, \mathbf{A})$ be an undirected graph with $n$ nodes, where $\mathbf{X} \in \mathbb{R}^{n \times d}$ is the node feature matrix and $\mathbf{A} \in \{0, 1\}^{n \times n}$ is the adjacency matrix. We also denote by $\mathcal{E}$ the edge set and $\mathcal{V}$ the node set.
$\text{deg}(i)$ is the degree of node $i$. We call any 1-hop subgraph $\mathcal{G}_i = (\mathbf{X}^{\mathcal{G}_{i}}, \mathbf{A}^{\mathcal{G}_{i}}) \subset \mathcal{G}$ of node $i$ an {\it ego-graph}, where $\mathbf{X}^{\mathcal{G}_{i}} = [\boldsymbol{X}_j]_{j\in\mathcal{N}_i \cup \{i\}}$, and $\mathbf{A}^{\mathcal{G}_{i}} \in \{0,1\}^{n_i\times n_i}$ is a principal submatrix of $\mathbf{A}$
Here $n_i=|\mathcal{N}_i \cup \{i\}|$ with $\mathcal{N}_i$ the 1-hop neighborhood set of node $i$.

Message-passing GNNs (MPNNs)~\cite{GCN,GAT} learn by exchanging information between neighboring nodes. To begin with, every node receives messages from its neighbors and processes them with non-linear neurons. Then, messages are aggregated as the new representation of that node. This iterative updating mechanism can be formalized as $\mathbf{Z}^{(k)} \leftarrow \mathbf{G}\mathbf{Z}^{(k-1)}\mathbf{W}^{(k-1)}$, where $\mathbf{W}^{(k)}$ is a learnable weight matrix of the $k$th layer. For notational convenience, we integrate message aggregation and activation into one message propagation matrix $\mathbf{G}$, with the general form $\mathbf{G}=\Sigma\mathbf{A}$ where $\Sigma$ is an activation operator such as Sigmoid $\sigma(\cdot)$ and ReLU. $\mathbf{G}$ varies with GNN types, such as
$\mathbf{G} = \Sigma\hat{\mathbf{D}}^{-1/2}\hat{\mathbf{A}}\hat{\mathbf{D}}^{-1/2}$
for a Graph Convolutional Network (GCN)~\cite{GCN}. Here $\hat{\mathbf{A}}$ represents the graph with self-loops: $\hat{\mathbf{A}}=\mathbf{A}+\mathbf{I}_n$ and $diag(\hat{\mathbf{D}})=\mathbf{1}_n^\top\hat{\mathbf{A}}$, where $\mathbf{I}_n\in\{0, 1\}^{n \times n}$ and $\mathbf{1}_n \in \{1\}^n$ are respectively the identity matrix and the all-ones vector. To sum up, we can denote the output of the $K$th MPNN layer as $\mathbf{Z}^{(K)} = \mathbf{\Gamma}\mathbf{X}\mathbf\Theta$, where $\mathbf{\Gamma}=\mathbf{G}^K$ and $\mathbf\Theta=\prod_{i=0}^{K-1}{\mathbf{W}^{(i)}}$.

\begin{figure}
    \centering
    \includegraphics[width=0.9\columnwidth]{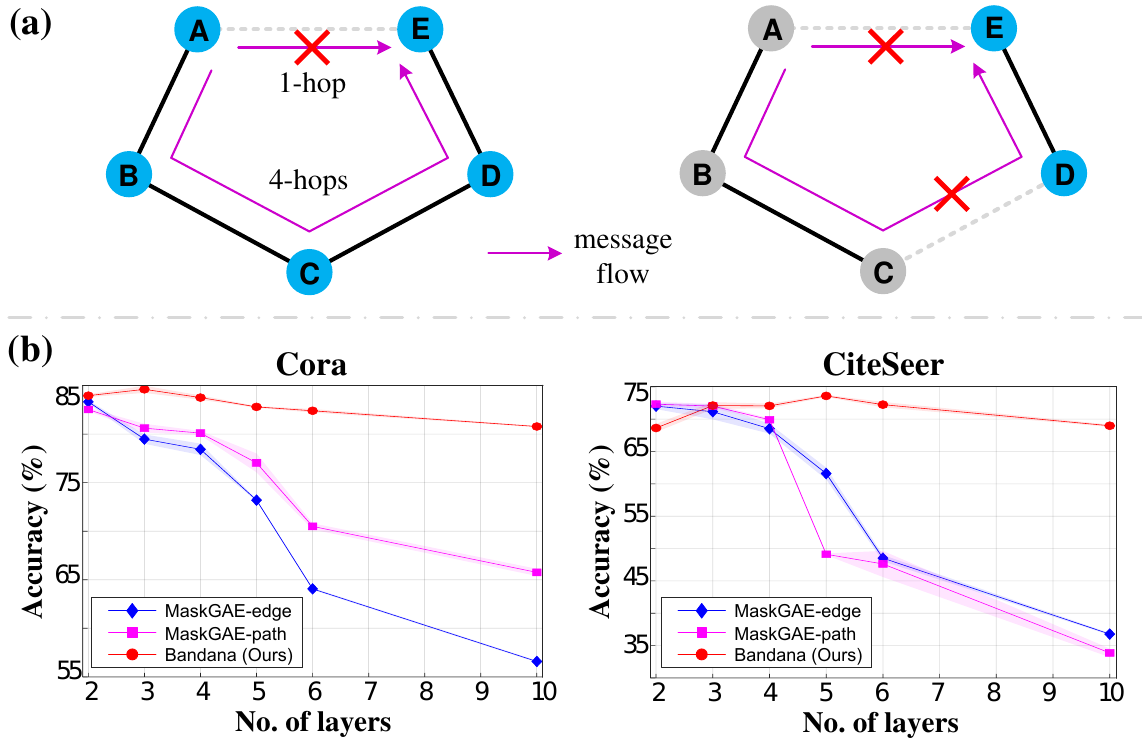}
    \includegraphics[width=\columnwidth]{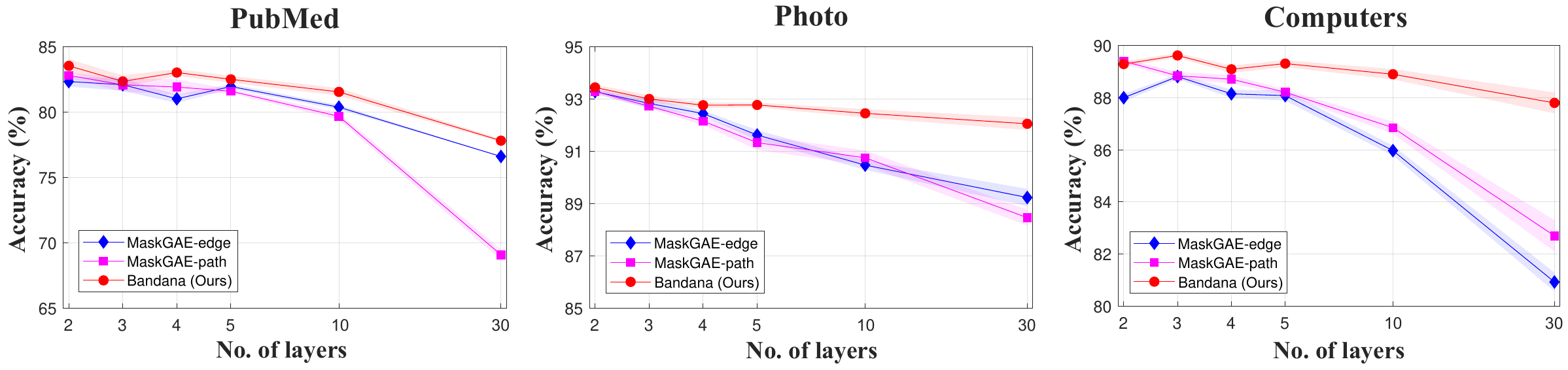}
    \caption{
    Blocked message flows. \textmd{{\bf(a)} A toy example: two paths are available from A to E in the pentagonal graph. {\bf Left}: edge (A,E) is masked out.
    Message from A can only reach E at the cost of being aggregated 3 more times. {\bf Right}: (C,D) is also masked out. E is now out of reach of A (as well as B and C).
    {\bf(b)} Node classification accuracy of a GCN pre-trained by two counterparts of \texttt{MaskGAE} (\textcolor{blue}{blue}, \textcolor{magenta}{magenta}) and \texttt{Bandana} (\textcolor{red}{red}) w.r.t. the network depth. 
    }}
    \label{fig:messageflows}
\end{figure}

\subsection{\textsc{TopoRec}}\label{3.2}

\textsc{TopoRec}s 
mask a subset of edges $\mathcal{E}_m \subset \mathcal{E}$ and use the unmasked set $\bar{\mathcal{E}}_m = \mathcal{E} - \mathcal{E}_m$ along with the entire node set to train an encoder. $\bar{\mathcal{E}}_m$ (resp. $\mathcal{E}_m$) induces a subgraph with adjacency matrix $\bar{\mathbf{A}}_{m}$ (resp. $\mathbf{A}_{m} = \mathbf{A} - \bar{\mathbf{A}}_{m}$). The masking process is defined as
\begin{equation}\label{eq:mask}
\bar{\mathbf{A}}_{m}=\mathbf{A} \circ \mathbf{M}, \quad \mathbf{M} := [\mathbb{1}_{[(i,j)\in \bar{\mathcal{E}}_m]}]^{n \times n} \in \{0, 1\}^{n \times n}
\end{equation}
with $\circ$ the Hadamard product. For discrete \textsc{TopoRec}s, every entry of the masking matrix $\mathbf{M}$ is an indicator $\mathbb{1}_{[(i,j)\in \bar{\mathcal{E}}_m]}$ that determines if the edge $(i,j)$ is retained. It follows an i.i.d. Bernoulli distribution $M_{ij} \sim Bernoulli(1-p), \forall i, j$ where $p$ controls the mask ratio.

Formally, A \textsc{TopoRec} is a parameterized binary map $r(\mathbf{X}, \bar{\mathbf{A}}_{m}): \mathbb{R}^{n \times d} \times \{0, 1\}^{n \times n} \rightarrow (0, 1)^{n \times n}$ implemented by two head-to-tail networks, the so-called {\it encoder-decoder}. Encoder, the GNN to be pre-trained, encodes the input features as latent representations, with the $k$th-layer weights \smash{$\mathbf{W}_{e}^{(k)}$}. Decoder plays an auxiliary role in recovering the representations, with the $k$th-layer weights \smash{$\mathbf{W}_{d}^{(k)}$}.
A \textsc{TopoRec} with a single-layer MLP decoder is formalized as
\usetagform{normalsize}
\begin{equation}\small
r(\mathbf{X}, \bar{\mathbf{A}}_{m}) := \underbracket[0.8pt][3pt]{\Sigma_d(\boldsymbol{b}_d+\mathbf{W}_d}_{\text{decoder}}\underbracket[0.8pt][3pt]{(\mathbf{\Gamma}\mathbf{X}\mathbf\Theta)}_{\text{encoder}}), \ \mathbf{\Gamma}= (\Sigma_e\bar{\mathbf{A}}_{m})^K, \mathbf\Theta=\prod_{i=0}^{K-1}{\mathbf{W}_{e}^{(i)}}
\end{equation}
where $\Sigma_e$ (resp. $\Sigma_d$) is the activation operator of the encoder (resp. decoder). 
Finally, the reconstruction loss $\mathcal{L} = \mathcal{L}(r(\mathbf{X}, \bar{\mathbf{A}}_{m}), \mathbf{A}_{m} )$ minimizes the error between the output and the masked data. The cross-entropy is widely adopted to reconstruct links:
\usetagform{normalsize}
\begin{equation}\small
\label{eq:toporecce}\hspace*{-0.05in}
\mathcal{L}(r(\mathbf{X},\bar{\mathbf{A}}_{m}),\mathbf{A}_{m}):=\boldsymbol{1}_n^\top\left(\frac{\delta_{\mathcal{E}_m}}{|\mathcal{E}_m|} + \frac{\delta_{\mathcal{E}^-}}{|\mathcal{E}^-|}\right)(-\mathbf{A}_{m}\circ\log r(\mathbf{X},\bar{\mathbf{A}}_{m}))\boldsymbol{1}_n
\end{equation}
Here $\scriptsize \delta_{\mathcal{E}} = \delta_{(i,j)}(\mathcal{E}):=\begin{cases}
1, (i,j) \in \mathcal{E} \\
0, (i,j) \notin \mathcal{E}
\end{cases}$
is the Dirac measure: the term \smash{$\scriptsize \frac{\delta_{\mathcal{E}_m}}{|\mathcal{E}_m|}$} in \cref{eq:toporecce} filters and averages every masked edge from the cross-entropy matrix $(-\mathbf{A}_{m}\circ\log r(\mathbf{X},\bar{\mathbf{A}}_{m}))$, while \smash{$\scriptsize \frac{\delta_{\mathcal{E}^-}}{|\mathcal{E}^-|}$} indicates the sampled negative edge set $\mathcal{E}^- \subset \mathcal{V}\times\mathcal{V} - \mathcal{E}$.

\subsection{A Message Propagation View of \textsc{TopoRec}s}\label{3.3}

In this subsection, we revisit the message propagation to answer the question ``{\it why are discrete \textsc{TopoRec}s topologically uninformative?}''.

\subsubsection{Global uninformativeness: blocked message flows.} A {\it message flow} is the directed message pathway from a source node to a sink (target) node~\cite{FlowX}. 
Let us analyze the message flows of a discrete \textsc{TopoRec}.
For an ego-graph $\mathcal{G}_i$, the central node $i$ randomly selects a subset of nodes from its neighborhood $\mathcal{N}_i$ and aggregates messages from them only. 
This indiscriminate selection {\it obstructs the message flows that may be crucial to the sink nodes}: 
source nodes of these message flows are not able to transmit their messages directly to the sink nodes, resulting in a large amount of stretched or even blocked flows, which are very likely to disrupt the connectivity of the original graph (as the mask ratios of these masked autoencoders are usually very high~\cite{MAE, MaskGAE}), as shown in Figure \ref{fig:messageflows}(a).

\begin{figure}
    \centering
    \includegraphics[width=\columnwidth]{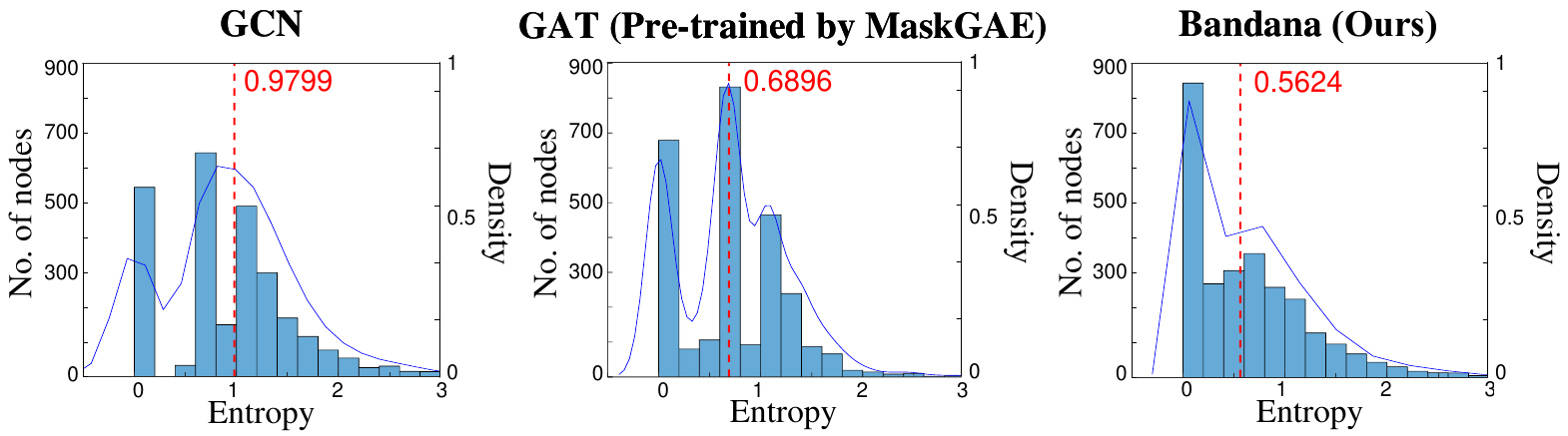}
    \caption{
    Entropy histograms of the edge weight distribution in ego-graphs on Cora. \textmd{\textcolor{blue}{Blue} solid lines are the Gaussian kernel density estimation curves with the \textcolor{red}{red} dashed lines medians. 
    }}
    \label{fig:neighborentropy}
\end{figure}

Moreover, we reveal that {\it discrete masking makes the encoder vulnerable to the over-smoothing problem}. 
To formalize, we introduce a commonly used metric, the {\it Dirichlet energy}~\cite{EGNN, OverSmooth}, to evaluate the over-smoothness of a discretely masked graph.  
The lower the energy, the severer the over-smoothness. Our conclusion is summarized by the following theorem.

\begin{tcolorbox}[breakable, pad at break=2.5mm, before skip=1mm, after skip=1mm, width=1.0\linewidth, boxsep=0mm, arc=1.5mm, left=1.5mm, right=1.5mm, top=1.5mm, bottom=1mm]
\begin{theorem}[Vulnerability of discrete \textsc{TopoRec}s to over-smoothing] \label{thm:dirichlet}
Let $\mathcal{G}_i=(\mathbf{X}^{\mathcal{G}_{i}}, \mathbf{A}^{\mathcal{G}_{i}})$ be an ego-graph with $n_i\ge2$. Assume $\boldsymbol{X}_j^{\mathcal{G}_{i}} = \boldsymbol{X}_k^{\mathcal{G}_{i}}$ for $\forall j,k \in \mathcal{N}_i$. Define the \emph{ego Dirichlet Energy} of $\mathcal{G}_i$ as 
\begin{equation}
E_D(\mathcal{G}_i) := \frac{1}{n_i}\sum_{j \in \mathcal{N}_i}\|\boldsymbol{X}_i^{\mathcal{G}_{i}}-\boldsymbol{X}_j^{\mathcal{G}_{i}}\|^2
\end{equation}
If a connected component $\mathcal{G}_{i,m}$ of $\mathcal{G}_i$ is induced by imposing masks following the i.i.d. $Bernoulli(1-p), 0<p \le 1$ to $\mathbf{A}^{\mathcal{G}_{i}}$, then we have $E_D(\mathcal{G}_{i,m}) \le E_D(\mathcal{G}_i)$. This inequality is an equality iff $p=1$.
\end{theorem}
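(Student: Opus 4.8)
The plan is to exploit the homogeneity hypothesis $\boldsymbol{X}_j^{\mathcal{G}_{i}}=\boldsymbol{X}_k^{\mathcal{G}_{i}}$ for all $j,k\in\mathcal{N}_i$ to collapse the energy into a pure counting quantity. Writing $c:=\|\boldsymbol{X}_i^{\mathcal{G}_{i}}-\boldsymbol{X}_j^{\mathcal{G}_{i}}\|^2$ for the now-common squared distance from the center to any neighbor, every summand of $E_D$ equals $c$, so that $E_D(\mathcal{G}_i)=\frac{|\mathcal{N}_i|}{n_i}c=\frac{n_i-1}{n_i}c$ using $n_i=|\mathcal{N}_i|+1$. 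First I would set up the analogue for the masked component: letting $\mathcal{N}_i^m$ denote the neighbors still joined to $i$ by a surviving edge and $n_{i,m}$ the number of nodes of $\mathcal{G}_{i,m}$, the same collapse gives $E_D(\mathcal{G}_{i,m})=\frac{|\mathcal{N}_i^m|}{n_{i,m}}c$, since each deleted spoke simply drops one identical term from the numerator while lowering the normalizer.

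The heart of the argument is then the scalar inequality $\frac{|\mathcal{N}_i^m|}{n_{i,m}}\le\frac{n_i-1}{n_i}$, which I would obtain by chaining two monotonicity facts about $f(x)=\frac{x}{x+1}$. Because $\mathcal{G}_{i,m}$ is a connected component containing $i$ together with all of its surviving direct neighbors, it holds that $n_{i,m}\ge|\mathcal{N}_i^m|+1$, whence $\frac{|\mathcal{N}_i^m|}{n_{i,m}}\le\frac{|\mathcal{N}_i^m|}{|\mathcal{N}_i^m|+1}=f(|\mathcal{N}_i^m|)$. Since masking can only shrink the neighbor set, $|\mathcal{N}_i^m|\le n_i-1$, and the strict positivity $f'(x)=\frac{1}{(x+1)^2}>0$ yields $f(|\mathcal{N}_i^m|)\le f(n_i-1)=\frac{n_i-1}{n_i}$. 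Multiplying through by $c\ge0$ delivers $E_D(\mathcal{G}_{i,m})\le E_D(\mathcal{G}_i)$.

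For the equality case I would trace back where both steps can be tight: we need $n_{i,m}=|\mathcal{N}_i^m|+1$ and $|\mathcal{N}_i^m|=n_i-1$ simultaneously, which together force the mask to delete no edge incident to $i$, i.e. the Bernoulli draw retains every incident spoke almost surely — precisely the no-removal boundary of the parameter flagged in the statement. I expect the \emph{main obstacle} to be exactly this connected-component bookkeeping rather than the monotonicity: the naive picture of a star ego-graph ignores that a neighbor whose spoke is cut may still survive inside $\mathcal{G}_{i,m}$ through a neighbor--neighbor path, so one must argue carefully that any such retained node either leaves $|\mathcal{N}_i^m|$ strictly below $n_i-1$ or inflates $n_{i,m}$ strictly above $|\mathcal{N}_i^m|+1$, breaking tightness in at least one of the two inequalities. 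A secondary point to dispatch is the degenerate regime $c=0$ (center feature coinciding with its neighbors), under which both energies vanish and equality is vacuous; the clean \emph{iff} therefore relies on the implicit nondegeneracy $c>0$, which I would state explicitly when closing the equality direction.
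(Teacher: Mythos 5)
Your proof is correct, and in several respects it is more careful than the paper's own argument, though both rest on the same counting collapse $E_D(\mathcal{G}_i)=\frac{n_i-1}{n_i}c$. The paper's proof works \emph{in expectation}: it asserts that the masked ego-graph ``obviously'' has $p(n_i-1)+1$ nodes (with $p$ the retention probability), i.e.\ it substitutes the expected number of surviving spokes for the random count, then compares $\frac{n_i-1}{n_i}$ with $\frac{p(n_i-1)}{p(n_i-1)+1}$ by one algebraic simplification, getting a difference proportional to $(n_i-1)(1-p)\ge 0$. Your argument is \emph{pathwise}: you fix an arbitrary realization of the Bernoulli masks and compare $\frac{|\mathcal{N}_i^m|}{n_{i,m}}$ with $\frac{n_i-1}{n_i}$ via monotonicity of $x\mapsto x/(x+1)$. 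What your route buys: (i) the inequality holds for every realization, not just on average, and you avoid the paper's silent conflation of a Binomial count with its mean (note $\mathbb{E}[N/(N+1)]\neq \mathbb{E}[N]/(\mathbb{E}[N]+1)$, so the paper's displayed ``equality'' is really a Jensen-type bound); (ii) you handle the connected-component bookkeeping the paper skips entirely — a neighbor whose spoke is cut can remain in $\mathcal{G}_{i,m}$ through neighbor--neighbor edges, inflating $n_{i,m}$, and your two-step chain shows this only makes the inequality stricter; (iii) you flag the degenerate case $c=0$, in which the ``iff $p=1$'' claim fails under both proofs, so the nondegeneracy assumption is genuinely needed. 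What the paper's route buys: since everything is expressed in the parameter $p$, the characterization ``equality iff $p=1$'' falls out immediately, whereas your pathwise equality condition (no incident spoke deleted) occurs with probability $p^{\,n_i-1}>0$ even when $p<1$, so you must pass to an almost-sure or in-expectation statement to recover the theorem's phrasing — a step you gesture at (``retains every incident spoke almost surely'') but should state explicitly to close the equivalence.
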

\begin{proof} Please refer to Appendix \ref{A.1}.
\end{proof}
\end{tcolorbox}

Note that the Dirichlet energy of the entire graph is exactly the sum of ego Dirichlet energies over all ego-graphs. So 
Theorem \ref{thm:dirichlet} remarks that a discretely masked graph is more likely to be over-smoothed. As such, discrete \textsc{TopoRec}s obtain relatively trivial expressive power with deeper GNN layers. We have also conducted an experiment to verify this on 5 graph benchmark datasets. It is shown in Figure \ref{fig:messageflows}(b) that the performance of \texttt{MaskGAE} goes into a nosedive with a 5-or-more-layer GCN.

\subsubsection{Local uninformativeness: indiscriminative neighborhood.} A GCN or Graph Attention Network (GAT)~\cite{GAT} is usually chosen as the encoder of a \textsc{TopoRec}. However, both of them are {\it not capable of distinguishing the messages among different neighbors effectively}. We quantify the discriminability of each neighbor $j$'s message in an ego-graph $\mathcal{G}_i$ as {\it the dispersion of edge weights} assigned by node $i$. In GCN, such assignment is realized only by a function of node degrees $w_{ij}=1/\sqrt{(\deg(i)+1)(\deg(j)+1)}$. 
This does not work well due to the power-law tendency of networks~\cite{scale-free}.
GAT, by contrast, explicitly models the neighborhood by introducing learnable self-attention matrices: $w_{ij}=\text{softmax}_j({\text{LeakyReLU}(\mathbf{a}^\top[\mathbf{W}_{att}\boldsymbol{Z}_i||\mathbf{W}_{att}\boldsymbol{Z}_j])})$, but its discriminability is limited either, as previous research~\cite{StatGAT} indicates that the attention weights assigned by GAT are roughly the same for different neighbors. 
Unfortunately, the discrete masking strategy will not provide such neighbor discriminability, because their only concern is the {\it existence} of links to {\it some} neighbors. 

We have conducted another experiment on Cora for demonstration. Edge weights assigned by GCN and GAT pre-trained by \texttt{MaskGAE} are first computed. Then, {\it entropies} of the weights in every ego-graph $H_i = -\sum_{j \in \mathcal{N}_i}w_{ij} \log {w_{ij}}$ are calculated. The smaller the entropy value, the more discriminative the edge weights. We visualize the entropies in Figure \ref{fig:neighborentropy} ({\bf left}, {\bf center}) and observe that 
for both GCN and GAT, the entropy distribution peaks shift towards the larger area, since \texttt{MaskGAE} is not able to provide discriminative power of neighbor messages for them.

To sum up, a new masking scheme is desired instead of discrete edge masking for better topological informativeness.

\section{Bandana}\label{4}

In this section, we aim to answer the questions ``{\it what is \texttt{Bandana}?}'' in Section \ref{4.1} and ``{\it why does \texttt{Bandana} learn informative representations?}'' in Section \ref{4.2}. \texttt{Bandana} is a novel topological masked graph autoencoder that encompasses two main mechanisms: {\it continuous bandwidth masks} and {\it layer-wise bandwidth prediction}.

\subsection{Bandwidth Masking and Prediction Pipeline}\label{4.1}

\subsubsection{Continuous bandwidth masks.}
According to Section \ref{3.2}, a discrete \textsc{TopoRec} samples edge masks from a Bernoulli distribution. As bandwidths are non-discrete, each entry of the masking matrix $\mathbf{M}$ should be randomly sampled from a continuous distribution instead, with the following requirements:

\begin{enumerate}[leftmargin=0.3in]
    \item[(\lowercase{\romannumeral 1})] 
    {\it Probabilistic}: $M_{ij} \in [0, 1]$ (0 for non-existent edges).
    \item[(\lowercase{\romannumeral 2})] 
    {\it Simplicial}: $\sum_{j\in\mathcal{N}_i}{M_{ij}} = 1, \sum_{j\notin\mathcal{N}_i}{M_{ij}} = 0$.
    \item[(\lowercase{\romannumeral 3})] 
    {\it Dispersive}: bandwidths of neighbors should be discriminative.
\end{enumerate}

For (\lowercase{\romannumeral 1}) and (\lowercase{\romannumeral 2}), every column of $\mathbf{M}$, i.e., bandwidths of neighbors in an ego-graph, should form a probabilistic simplex to stabilize the message passing process.
For (\lowercase{\romannumeral 3}), we want bandwidths to provide the discriminative power of neighbors. In light of these conditions, we choose the bandwidth distribution as follows.

\begin{tcolorbox}[breakable, pad at break=2.5mm, before skip=1mm, after skip=1mm, width=1.0\linewidth, boxsep=0mm, arc=1.5mm, left=1.5mm, right=1.5mm, top=1.5mm, bottom=1mm]
\begin{definition}[Bandwidth]\label{def:bandwidth}
For the rest of the paper, $\mathbf{M}\in [0, 1]^{n \times n}$ is a \emph{continuous matrix} consisting of i.i.d. probabilistic simplicial column vectors, of which each nonzero entry follows a \emph{Boltzmann-Gibbs distribution}: 
\begin{equation}\label{eq:bandwidth}
M_{ij} = \text{softmax}_j\left(\frac{m_{ij}}{\tau}\right) = \displaystyle\frac{\exp(m_{ij}/\tau)}{\sum_{k \in \mathcal{N}_j \cup \{j\}}{\exp(m_{kj}/\tau)}}
\end{equation}
where $m_{ij} \sim \mathcal{N}(0, 1)$ and $\tau$ denotes the temperature.
\end{definition}
\end{tcolorbox}

Intuitively, a ``bandwidth'' on an edge is the maximum proportion of the output messages to the input messages through that edge per message passing step. The softmax in \cref{eq:bandwidth} plays a dual role of {\it normalization} and {\it amplification}: (\lowercase{\romannumeral 1}) normalization guarantees a probabilistic simplex; (\lowercase{\romannumeral 2}) exponential softmax amplifies the weight dispersion in an ego-graph, which has already been discovered and utilized by some attention-based studies~\cite{cosFormer}. 
Though both are edge weights, bandwidths are fundamentally different from attention weights, which is further discussed in Appendix \ref{C.1}.

Instead of the discrete masking matrix, we use bandwidths to perturb the adjacency matrix: $\tilde{\mathbf{A}}=\mathbf{A} \circ \mathbf{M}$. This converts the initial discrete adjacency matrix with $A_{ij} \in \{0, 1\}$ into a continuous matrix with $\tilde{A}_{ij} \in [0, 1]$. Unlike the discrete case, both $\mathbf{M}$ and $\tilde{\mathbf{A}}$ are no longer symmetric, because there are two different bandwidth values on every edge for the bidirectional message propagation.

Note that we also adopt the temperature $\tau$ to control the bandwidth distribution. Specifically, it controls the {\it continuity of the mask}: when $\tau\rightarrow0$, the Boltzmann-Gibbs distribution degenerates to the superposition of Dirac $\delta$ functions at 0 and 1, that is, the discrete Bernoulli mask; when $\tau\rightarrow\infty$, it degenerates to Uniform. 

\subsubsection{Encoding.} \texttt{Bandana}'s encoder network propagates bandwidth-restricted messages. To be more specific, the perturbed adjacency matrix $\tilde{\mathbf{A}}$ represents an undirected graph with bidirectional edge weights, on which \texttt{Bandana} performs message propagation instead of $\bar{\mathbf{A}}_{m}$. Propagation on the $k$th layer can be formalized as
\begin{equation}\label{eq:bandanaprop}
\mathbf{Z}^{(k)} \leftarrow \tilde{\mathbf{G}}\mathbf{Z}^{(k-1)}\mathbf{W}_{e}^{(k-1)}, \quad \tilde{\mathbf{G}}=\Sigma_e\tilde{\mathbf{A}}
\end{equation}
The entire encoder-decoder is defined as
\begin{equation}
r(\mathbf{X}, \mathbf{A}) := \underbracket[0.8pt][3pt]{\Sigma_d(\boldsymbol{b}_d+\mathbf{W}_d}_{\text{decoder}}\underbracket[0.8pt][3pt]{(\tilde{\mathbf{\Gamma}}\mathbf{X}\mathbf\Theta)}_{\text{encoder}}), \quad \tilde{\mathbf{\Gamma}} = \tilde{\mathbf{G}}^{K}
\end{equation}
where $\Sigma_d$ refers to the softmax function. It now models the representation space in a way that every node receives and aggregates different ratios of messages from different neighbors.

\subsubsection{Bandwidth prediction.}
Following the asymmetric encoder-decoder architecture~\cite{MAE}, \texttt{Bandana} employs a lightweight MLP as its decoder. What the bandwidth decoder ``reconstructs'' is the bandwidth value of every edge, i.e. it {\it predicts how much every edge is masked during training}. 
This is a logistic regression problem 
that can still be optimized by the cross-entropy objective:
\begin{equation}\label{eq:bandce}
\mathcal{L}(r(\mathbf{X}, \mathbf{A}), \tilde{\mathbf{A}} ):=\boldsymbol{1}_n^\top\left(\frac{\delta_{\mathcal{E}}}{|\mathcal{E}|}+ \frac{\delta_{\mathcal{E}^-}}{|\mathcal{E}^-|}\right)(-\tilde{\mathbf{A}}\circ\log r(\mathbf{X}, \mathbf{A}))\boldsymbol{1}_n
\end{equation}
The difference is that all positive edges ($\delta_{\mathcal{E}}$) are now participating in training. \
We still keep blocked edges $\delta_{\mathcal{E}^-}$ as zero samples.

\subsubsection{Layer-wise masking and prediction.} It has become common knowledge that different network layers capture different granularities of information: shallower layers capture more general information, while deeper layers capture information more specific to the pretext task
~\cite{diff-layers1}.
We propose a layer-wise masking scheme to explicitly capture different granularities. We generate different bandwidth masks for every layer of GNN, with the $k$th-layer perturbed adjacency matrix $\tilde{\mathbf{A}}^{(k)}$ and the corresponding message propagation matrix $\tilde{\mathbf{G}}^{(k)}$. On the backend, we share one MLP decoder for every layer.
We calculate the reconstruction loss for each layer and the final loss is the average of all layer losses:
\usetagform{normalsize}
\begin{equation}\small
\mathcal{L} = \frac{1}{K}\sum_{k=0}^{K-1}{\mathcal{L}^{(k)}} = \frac{1}{K}\sum_{k=0}^{K-1}{\mathcal{L}( \Sigma_d(\boldsymbol{b}_d+\mathbf{W}_d(\tilde{\mathbf{\Gamma}}^{(k)}\mathbf{X}\mathbf\Theta^{(k)})), \tilde{\mathbf{A}}^{(k)})}
\end{equation}
where \smash{$\tilde{\mathbf{\Gamma}}^{(i)} = \prod_{i=0}^{k-1}{\tilde{\mathbf{G}}^{(k)}}, \mathbf\Theta^{(k)}=\prod_{i=0}^{k-1}{\mathbf{W}_{e}^{(i)}}$}. 

\subsection{Why Are Bandwidths Informative?}\label{4.2}

In this subsection, we give empirical and theoretical support for our bandwidth schemes.

\subsubsection{A fine-grained strategy for informative topology.}\label{4.2.1}
The advantages of bandwidth masking and prediction are threefold. 
\begin{itemize}[leftmargin=*]
    \item Compared with binary link reconstruction, predicting a continuous bandwidth value is a more {\it fine-grained and challenging} task which is more meaningful to the mask modeling~\cite{MAE}.
    \item {\it Global informativeness}, as non-discrete bandwidth masks guarantee a complete graph topology and unimpeded message flows so that deeper GNNs can be pre-trained more effectively. As shown in Figure \ref{fig:messageflows}(b), \texttt{Bandana} greatly outperforms \texttt{MaskGAE} on GNN pre-training with 5 or more layers.
    \item {\it Local informativeness}, as \texttt{Bandana} can provide the discriminative power of neighbor messages by bandwidth prediction. 
    As shown in Figure \ref{fig:neighborentropy} ({\bf right}), \texttt{Bandana} has the most discriminative neighborhood weights. 
\end{itemize} 

Such informativeness is further validated by an embedding visualization study on Karate Club~\cite{KarateClub} in Appendix~\ref{B}.

\subsubsection{Implicit optimization on the topological manifold.}
\texttt{Bandana}'s excellent topological learning ability is theoretically guaranteed. We elucidate that \texttt{Bandana} can be interpreted as a regularized denoising autoencoder~\cite{DAE} in an implicit graph topological space, while a discrete \textsc{TopoRec} cannot. Furthermore, bandwidth prediction is mathematically equivalent to optimizing a ``score'' in that space. 
To this end, we first assign each column of the adjacency matrix $\mathbf{A}$ to the corresponding node in the graph as its new ``feature''.

\begin{tcolorbox}[breakable, pad at break=2.5mm, before skip=1mm, after skip=1mm, width=1.0\linewidth, boxsep=0mm, arc=1.5mm, left=1.5mm, right=1.5mm, top=1.5mm, bottom=1mm]
\begin{definition}[Topological encoding]
    A \emph{topological encoding matrix} is defined as $\mathbf{T}:=\mathbf{A} - \boldsymbol{1}_n\boldsymbol{1}_n^\top \in \mathbb{R}^{n \times n}$, where $\boldsymbol{1}_n$ is the all-ones vector. Denote $\boldsymbol{T}_j$ as the \emph{topological encoding} of node $j$.
\end{definition}
\end{tcolorbox}

One advantage of the topological encoding is that it allows us to write the bandwidth masking as adding random noises on it:
\begin{equation}
\tilde{\mathbf{T}} = \delta_{\mathcal{E}}(\mathbf{T} + \boldsymbol{\epsilon})=\mathbf{T} + \boldsymbol{\epsilon}, \ \boldsymbol{\epsilon}_i \sim \text{softmax}(\mathcal{N}(0, \mathbf{I}_n))
\end{equation}

Assume \smash{$\boldsymbol{T}_j$} and the perturbed \smash{$\tilde{\boldsymbol{T}}_j$} follow the probability distributions \smash{$p(\boldsymbol{T}_j)$} and \smash{$p(\tilde{\boldsymbol{T}_j})$} respectively. 
From the topological encoding perspective, $\mathbf{X}$ and $\mathbf{\Theta}$ are conversely the non-linear transformations on $\tilde{\mathbf{A}}$, in which case we denote our bandwidth reconstructor by $r_{\mathbf{X}}$. Under this premise, the following Proposition gives that $r_{\mathbf{X}}$ can be viewed as a regularized denoising autoencoder.

\begin{tcolorbox}[breakable, pad at break=2.5mm, before skip=1mm, after skip=1mm, width=1.0\linewidth, boxsep=0mm, arc=1.5mm, left=1.5mm, right=1.5mm, top=1.5mm, bottom=1mm]
\begin{proposition}[Non-discrete \textsc{TopoRec} is a denoising autoencoder]\label{thm:dae}
Suppose a \textsc{TopoRec} on vectors $r_{\mathbf{X}}: \mathbb{R}^{n} \rightarrow \mathbb{R}^{n}$ is at least first-order differentiable (to $\boldsymbol{T}_{j}$ for the rest of the paper). If the perturbed topological encoding $\tilde{\boldsymbol{T}}_j$ on a connected graph follows a continuous distribution $p(\tilde{\boldsymbol{T}_j})$ and satisfies
\begin{enumerate}[leftmargin=0.3in]
    \item[(\lowercase{\romannumeral 1})] 
    $n \ll 2|\mathcal{E}|$, and
    \item[(\lowercase{\romannumeral 2})] 
    all elements in $\{\tilde{\boldsymbol{T}}_{j}\}_{j=1}^{n}$ follow an i.i.d. \emph{isotropic multivariate Gaussian} \smash{$\mathcal{N}(\boldsymbol{\mu}_{\tilde{\boldsymbol{T}}_{j}},\mathbf{\Sigma}_{\tilde{\boldsymbol{T}}_{j}})$}, i.e. the covariance matrix satisfies \smash{$\mathbf{\Sigma}_{\tilde{\boldsymbol{T}}_{j}}=c\mathbf{I}$} where $c$ is an arbitrary constant.
\end{enumerate}
Then, $\mathcal{L}$ in \cref{eq:bandce} defines a regularized denoising autoencoder:
\begin{equation}\small
\label{eq:rdae}\hspace*{-0.05in}
\mathcal{L} = \underbracket[0.8pt][2pt]{\mathbb{E}_{j\in\mathcal{V}}[\| r_{\mathbf{X}}(\boldsymbol{T}_{j}) - \boldsymbol{T}_{j} \|^2]}_\text{reconstruction} + \underbracket[0.8pt][2pt]{\sigma_{\epsilon}^2 \mathbb{E}_{j\in\mathcal{V}}[\| \nabla r_{\mathbf{X}}(\boldsymbol{T}_{j})\|^2_F]}_\text{regularization} +o(\sigma_{\epsilon}^2)
\end{equation}
where $\sigma_{\epsilon}^2$ is the noise variance.
\end{proposition}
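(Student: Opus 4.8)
The plan is to follow the classical ``training-with-noise equals Jacobian/Tikhonov regularization'' argument for denoising autoencoders (the standard analysis behind \cite{DAE}), transplanted into the topological-encoding space. First I would recast the cross-entropy objective of \cref{eq:bandce} as a denoising reconstruction on the encodings $\boldsymbol{T}_j$: since $r_{\mathbf{X}}$ ingests the perturbed encoding $\tilde{\boldsymbol{T}}_j = \boldsymbol{T}_j + \boldsymbol{\epsilon}$ and is scored against the clean topological target, the loss takes the denoising form $\mathcal{L} = \mathbb{E}_{j \in \mathcal{V}}\,\mathbb{E}_{\boldsymbol{\epsilon}}[\,\|r_{\mathbf{X}}(\boldsymbol{T}_j + \boldsymbol{\epsilon}) - \boldsymbol{T}_j\|^2\,]$. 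This reduction is exactly where assumption (\lowercase{\romannumeral 1}), $n \ll 2|\mathcal{E}|$, earns its keep: on a sufficiently well-connected graph the sum of degrees dominates $n$, so the positive-edge cross-entropy over $\tilde{\mathbf{A}}$ concentrates and behaves, to leading order, as a squared reconstruction error on the topological encodings rather than a genuinely distinct logistic objective.

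With the denoising mean-squared objective in hand, I would Taylor-expand the reconstructor to first order about the clean input---legitimate by the standing first-order-differentiability hypothesis---writing $r_{\mathbf{X}}(\boldsymbol{T}_j + \boldsymbol{\epsilon}) = r_{\mathbf{X}}(\boldsymbol{T}_j) + \nabla r_{\mathbf{X}}(\boldsymbol{T}_j)\,\boldsymbol{\epsilon} + o(\|\boldsymbol{\epsilon}\|)$, where $\nabla r_{\mathbf{X}}(\boldsymbol{T}_j)$ denotes the Jacobian. Substituting and expanding the squared norm produces three pieces: the noise-free reconstruction $\|r_{\mathbf{X}}(\boldsymbol{T}_j) - \boldsymbol{T}_j\|^2$, a cross term $2\,(r_{\mathbf{X}}(\boldsymbol{T}_j) - \boldsymbol{T}_j)^\top \nabla r_{\mathbf{X}}(\boldsymbol{T}_j)\,\boldsymbol{\epsilon}$ linear in $\boldsymbol{\epsilon}$, and the quadratic term $\|\nabla r_{\mathbf{X}}(\boldsymbol{T}_j)\,\boldsymbol{\epsilon}\|^2$, together with an $o(\|\boldsymbol{\epsilon}\|^2)$ remainder.

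The decisive step is to take the expectation over $\boldsymbol{\epsilon}$ and invoke assumption (\lowercase{\romannumeral 2}). Because $\tilde{\boldsymbol{T}}_j$ is isotropic Gaussian with covariance $c\mathbf{I}$, the perturbation is zero-mean with $\mathbb{E}[\boldsymbol{\epsilon}\boldsymbol{\epsilon}^\top] = \sigma_{\epsilon}^2 \mathbf{I}$; hence the linear cross term vanishes in expectation, and the quadratic term collapses via $\mathbb{E}_{\boldsymbol{\epsilon}}[\,\boldsymbol{\epsilon}^\top \nabla r_{\mathbf{X}}^\top \nabla r_{\mathbf{X}}\, \boldsymbol{\epsilon}\,] = \sigma_{\epsilon}^2\,\mathrm{tr}(\nabla r_{\mathbf{X}}^\top \nabla r_{\mathbf{X}}) = \sigma_{\epsilon}^2\,\|\nabla r_{\mathbf{X}}(\boldsymbol{T}_j)\|_F^2$. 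Averaging the surviving terms over $j \in \mathcal{V}$ yields precisely the reconstruction-plus-regularization decomposition of \cref{eq:rdae}, with the leftover contributions aggregating into $o(\sigma_{\epsilon}^2)$.

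I expect the main obstacle to lie not in the squared-norm algebra but in justifying the two reductions that license the clean expansion. The bandwidth noise $\boldsymbol{\epsilon}_i \sim \text{softmax}(\mathcal{N}(0,\mathbf{I}_n))$ is neither Gaussian nor literally zero-mean on the simplex, so the bridge to assumption (\lowercase{\romannumeral 2})---treating $\boldsymbol{\epsilon}$ as an isotropic, zero-mean perturbation with a single scalar variance $\sigma_{\epsilon}^2$---must be argued with care, presumably as the many-neighbor regime (again through $n \ll 2|\mathcal{E}|$) in which a central-limit/concentration approximation becomes accurate. Equally delicate is reconciling the cross-entropy target $\tilde{\mathbf{A}}$ with the clean topological target $\boldsymbol{T}_j$ of the denoising MSE, and ensuring the $o(\|\boldsymbol{\epsilon}\|^2)$ remainder integrates to $o(\sigma_{\epsilon}^2)$ uniformly over nodes, which would require a bounded-higher-derivative or dominated-convergence argument on the finite connected graph. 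These distributional and remainder-control questions are where the genuine effort is needed.
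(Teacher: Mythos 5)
Your overall skeleton---first-order Taylor expansion of $r_{\mathbf{X}}$ about the clean encoding, expanding the square, and using an isotropic second moment to turn the quadratic term into $\sigma_{\epsilon}^2\|\nabla r_{\mathbf{X}}\|^2_F$---is the same classical denoising-autoencoder computation that the paper performs. The genuine gap lies in the two ``reductions'' you defer to your final paragraph: you assign the two assumptions to the wrong jobs, and with your assignment the argument would not close. Take first the cross-entropy-to-MSE step. You claim assumption (i), $n \ll 2|\mathcal{E}|$, makes the positive-edge cross-entropy ``concentrate'' into a squared error. No concentration argument does this: a cross-entropy objective against simplex-valued targets does not become an $\ell_2$ loss because the graph is dense. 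In the paper this step is an exact identity supplied by assumption (ii): once $\tilde{\boldsymbol{T}}_j$ is modeled as $\mathcal{N}(\boldsymbol{\mu}_{\tilde{\boldsymbol{T}}_{j}}, c\mathbf{I})$ and $r_{\mathbf{X}}(\tilde{\boldsymbol{T}}_j)$ is read as a predictor of the Gaussian mean, the negative log-likelihood equals $\frac{1}{2c}\|r_{\mathbf{X}}(\tilde{\boldsymbol{T}}_j)-\tilde{\boldsymbol{T}}_j\|^2$ plus a constant. Note also that the target there is the \emph{perturbed} encoding $\tilde{\boldsymbol{T}}_j$ (the loss in \cref{eq:bandce} is weighted by $\tilde{\mathbf{A}}$, not $\mathbf{A}$), so the expansion produces $(\nabla r_{\mathbf{X}}-\mathbf{I})\boldsymbol{\epsilon}$ terms rather than your $\nabla r_{\mathbf{X}}\,\boldsymbol{\epsilon}$; your clean-target MSE is a premise, not something \cref{eq:bandce} hands you.

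Second, the zero-mean noise. You derive $\mathbb{E}[\boldsymbol{\epsilon}]=0$ from assumption (ii), but (ii) constrains only the covariance of $\tilde{\boldsymbol{T}}_j$ to be $c\mathbf{I}$; it says nothing that forces the softmax noise to be centered, and in fact it is not: because each column of the mask is a probabilistic simplex, the noise mean within the ego-graph of node $i$ is $1/\text{deg}(i)$, and averaging over the graph gives $\mu_{\boldsymbol{\epsilon}} = n/(2|\mathcal{E}|)$ (this is \cref{eq:noisemean} in the paper's proof). That is exactly where assumption (i) enters: it is what makes $\mu_{\boldsymbol{\epsilon}}\rightarrow 0$, which in turn kills the linear cross terms in the expansion (including the extra contributions coming from the perturbed target). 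You do flag this mean problem at the end and guess that (i) is relevant via a central-limit approximation, but the paper's mechanism is this direct, elementary computation on the simplex---no asymptotic normality is needed. So the fix is to swap the roles: assumption (ii) converts cross-entropy into squared error exactly, assumption (i) makes the noise mean negligible; with that reassignment the Taylor-expansion core you wrote down goes through essentially as in the paper.
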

\begin{proof}
Please refer to Appendix \ref{A.2}. We also discuss the mildness of the assumptions in Appendix \ref{A.4}. 
\end{proof}
\end{tcolorbox}

Proposition \ref{thm:dae} is consistent with previous studies that the masked autoencoder is a kind of denoising autoencoder~\cite{MAE}, but the noise should technically be non-discrete for \textsc{TopoRec}s. 
According to the existing analysis of the denoising autoencoder~\cite{RAETheory1}, we have the following theorem.

\begin{tcolorbox}[breakable, pad at break=2.5mm, before skip=1mm, after skip=1mm, width=1.0\linewidth, boxsep=0mm, arc=1.5mm, left=1.5mm, right=1.5mm, top=1.5mm, bottom=1mm]
\begin{theorem}[Bandwidth prediction optimizes in the topological encoding space]\label{thm:gradopt}
Suppose $r_{\mathbf{X}}$ fulfills the condition in Proposition \ref{thm:dae}. Then for $\sigma_{\epsilon}^2 \rightarrow 0$, the optimal \textsc{TopoRec} $r_{\mathbf{X}}^* = \arg\min_{r_{\mathbf{X}}}{\mathcal{L}}$ is asymptotically equivalent to an implicit gradient optimizer of $\log p(\boldsymbol{T}_{j})$:
\begin{equation}
    r^*_{\mathbf{X}}(\boldsymbol{T}_{j}) - \boldsymbol{T}_{j}  \propto \nabla \log{p(\boldsymbol{T}_{j})}
\end{equation}
\end{theorem}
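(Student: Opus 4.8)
The plan is to derive the statement as a small-noise consequence of the regularized denoising-autoencoder form already established in Proposition~\ref{thm:dae}, using a calculus-of-variations argument in the spirit of~\cite{RAETheory1}. Starting from \cref{eq:rdae}, I would view $\mathcal{L}$ as a functional of the reconstruction map $r_{\mathbf{X}}$ and pass from the empirical node average to an integral against the density, writing
\begin{equation}
\mathcal{L}[r_{\mathbf{X}}] = \int p(\boldsymbol{T})\left(\| r_{\mathbf{X}}(\boldsymbol{T}) - \boldsymbol{T} \|^2 + \sigma_{\epsilon}^2 \| \nabla r_{\mathbf{X}}(\boldsymbol{T}) \|_F^2\right)d\boldsymbol{T} + o(\sigma_{\epsilon}^2)
\end{equation}
where I abbreviate $\boldsymbol{T}=\boldsymbol{T}_j$ and $p=p(\boldsymbol{T}_j)$. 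The two-term structure is exactly what makes the variational problem tractable.

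Next I would compute the Euler--Lagrange stationarity condition for the minimizer $r_{\mathbf{X}}^*$. The pointwise reconstruction term contributes $2p(\boldsymbol{T})(r_{\mathbf{X}}(\boldsymbol{T})-\boldsymbol{T})$, while the regularizer, after integration by parts against the variation $\delta r_{\mathbf{X}}$ (discarding boundary terms, which vanish because the flux $p\,\nabla r_{\mathbf{X}}$ decays at the boundary of the support on a connected graph), contributes $-2\sigma_{\epsilon}^2\,\nabla\!\cdot(p\,\nabla r_{\mathbf{X}})$. Setting the total variation to zero and dividing by $2p$ yields
\begin{equation}
r_{\mathbf{X}}^*(\boldsymbol{T}) - \boldsymbol{T} = \sigma_{\epsilon}^2\left(\nabla\log p(\boldsymbol{T})\cdot\nabla r_{\mathbf{X}}^*(\boldsymbol{T}) + \Delta r_{\mathbf{X}}^*(\boldsymbol{T})\right)
\end{equation}
where the score $\nabla\log p$ appears through the identity $\nabla p/p=\nabla\log p$, and the isotropy assumption $\mathbf{\Sigma}_{\tilde{\boldsymbol{T}}_j}=c\mathbf{I}$ of Proposition~\ref{thm:dae} ensures the regularizer is the plain Frobenius norm $\|\nabla r_{\mathbf{X}}\|_F^2$, so no anisotropic metric distorts this expression.

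The final step is the small-noise asymptotic. As $\sigma_{\epsilon}^2\to 0$ the reconstruction term forces $r_{\mathbf{X}}^*\to\mathrm{id}$, hence $\nabla r_{\mathbf{X}}^*\to\mathbf{I}_n$ and $\Delta r_{\mathbf{X}}^*\to 0$ at leading order. Substituting these zeroth-order values into the bracket and absorbing the corrections into $o(\sigma_{\epsilon}^2)$ gives
\begin{equation}
r_{\mathbf{X}}^*(\boldsymbol{T}_j) - \boldsymbol{T}_j = \sigma_{\epsilon}^2\,\nabla\log p(\boldsymbol{T}_j) + o(\sigma_{\epsilon}^2)
\end{equation}
which is precisely the claimed proportionality $r_{\mathbf{X}}^*(\boldsymbol{T}_j)-\boldsymbol{T}_j \propto \nabla\log p(\boldsymbol{T}_j)$ with scalar constant $\sigma_{\epsilon}^2$. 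I expect the main obstacle to be rigorously controlling this expansion rather than the formal variational computation: justifying that replacing $\nabla r_{\mathbf{X}}^*$ and $\Delta r_{\mathbf{X}}^*$ by their zeroth-order limits incurs only an $o(\sigma_{\epsilon}^2)$ error requires a genuine two-term perturbation analysis of the Euler--Lagrange equation, the first-order differentiability assumed on $r_{\mathbf{X}}$, and tail/decay conditions on $p(\boldsymbol{T}_j)$ to discard the boundary terms. The interpretation then follows immediately: iterating $\boldsymbol{T}\mapsto r_{\mathbf{X}}^*(\boldsymbol{T})$ performs gradient ascent on $\log p$, i.e. an implicit optimizer in the topological encoding space.
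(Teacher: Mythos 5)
Your proposal is correct and follows essentially the same route as the paper's proof: both treat the regularized loss from Proposition~\ref{thm:dae} as an integral functional against $p(\boldsymbol{T}_j)$, derive the Euler--Lagrange stationarity condition $r^*_{\mathbf{X}}(\boldsymbol{T})-\boldsymbol{T}=\sigma_\epsilon^2\bigl(\nabla\log p\cdot\nabla r^*_{\mathbf{X}}+\Delta r^*_{\mathbf{X}}\bigr)$ (the paper writes it coordinate-wise, you in divergence form after integration by parts), and then pass to the small-noise limit. The only cosmetic difference is that the paper cites~\cite{RAETheory1} for the asymptotic solution as $\sigma_\epsilon^2\to 0$, whereas you inline the zeroth-order perturbation argument ($\nabla r^*_{\mathbf{X}}\to\mathbf{I}$, $\Delta r^*_{\mathbf{X}}\to 0$) that underlies that citation.
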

\begin{proof}
Please refer to Appendix \ref{A.3}.
\end{proof}
\end{tcolorbox}

By Theorem \ref{thm:gradopt}, the optimal bandwidth predictor ($r_{\mathbf{X}}=r_{\mathbf{X}}^*$) optimizes the gradient of log probabilities of the topological encoding, indicating that the bandwidth masking and prediction scheme are {\it theoretically learning graph topology}.  
Based on our conclusions, \texttt{Bandana} can be further expanded to some theoretically grounded frameworks, such as score-based models~\cite{NCSN} and energy-based models~\cite{EBM}. We have further discussions in Appendix \ref{C.2}.

\section{Experiment Analyses}\label{5}

In this section, we first introduce experimental configurations in Section \ref{5.1}. More detailed settings can be found in Appendix \ref{D}. Then, we showcase the experiment results of \texttt{Bandana} to answer the following research questions:
\begin{itemize}[leftmargin=*]
    \item RQ1. {\it Is \texttt{Bandana} able to learn more informative topology than discrete \textsc{TopoRec}s in practice?} 
    \item RQ2. {\it How does \texttt{Bandana} perform on node classification?} 
    \item RQ3. {\it How does \texttt{Bandana} perform on link prediction?} 
    \item RQ4. {\it How effective are Boltzmann-Gibbs bandwidths and the layer-wise strategy?} 
\end{itemize}

More experiment results can be found in Appendix \ref{E}, including time and space consumptions (\ref{E.1}), more large-scale datasets (\ref{E.3}), parameter analyses (\ref{E.5}), the semi-supervised setting (\ref{E.6}), etc.

\subsection{Experimental Settings}\label{5.1}

\emph{Datasets.} Apart from the two synthetic datasets in Section \ref{5.2}, we conduct experiments on 12 well-known undirected and unweighted graph benchmark datasets, including (\lowercase{\romannumeral 1}) citation networks: Cora, CiteSeer, PubMed~\cite{Planetoid}; (\lowercase{\romannumeral 2}) co-purchase networks: Photo, Computers~\cite{Amazon/Coauthor}; (\lowercase{\romannumeral 3}) co-author networks: CS, Physics~\cite{Amazon/Coauthor}; (\lowercase{\romannumeral 4}) OGB networks: ogbn-arxiv, ogbn-mag, ogbl-collab, ogbl-ppa~\cite{OGB}; (\lowercase{\romannumeral 5}) hyperlink networks: Wiki-CS~\cite{Wiki-CS}. 
Detailed statistics can be found in Appendix \ref{D.1}, and experiment results of the last 4 datasets can be found in Appendix \ref{E.3} and \ref{E.4}.

\emph{Reproducibility.}
We report all quantitative results as ``mean ± standard deviation'' by running 10 times under the same setup. Hardware, training setups, and hyperparameters can be found in Appendix \ref{D.2} and \ref{D.3}.

\emph{Baselines.} As self-supervised methods are being studied, only self-supervised algorithms are considered as baselines. 
They are divided into the following categories:
(\lowercase{\romannumeral 1}) traditional graph autoencoders: \texttt{GAE}~\cite{GAE/VGAE}, \texttt{ARGA}~\cite{ARGA/ARVGA}; (\lowercase{\romannumeral 2}) variational graph autoencoders: \texttt{VGAE}~\cite{GAE/VGAE}, \texttt{ARVGA}~\cite{ARGA/ARVGA}, \texttt{SIG-VAE}~\cite{SIG-VAE}, \texttt{SeeGera}~\cite{SeeGera}\footnote{\texttt{SeeGera} fuses mask modeling with the variational autoencoder. We still count it as variational-based in light of its generative characteristic and learning objective.}; (\lowercase{\romannumeral 3}) contrastive and non-contrastive (with no negative sampling) methods: \texttt{GRACE} \cite{GRACE}, \texttt{GCA} \cite{GCA}, \texttt{COSTA} \cite{COSTA}, \texttt{CCA-SSG} \cite{CCA-SSG}, \texttt{T-BGRL} \cite{T-BGRL}; (\lowercase{\romannumeral 4}) \textsc{FeatRec}s: \texttt{GraphMAE} \cite{GraphMAE}, \texttt{GraphMAE2} \cite{GraphMAE2}; (\lowercase{\romannumeral 5}) \textsc{TopoRec}s: \texttt{S2GAE} \cite{S2GAE}, \texttt{MaskGAE}-edge, \texttt{MaskGAE}-path~\cite{MaskGAE} (with edge masking and path masking separately). 
We use ``$\dagger$'' to mark baselines that are implemented by us for the current task because they are not officially implemented.

\begin{figure}
    \centering
    \includegraphics[width=0.48\textwidth]{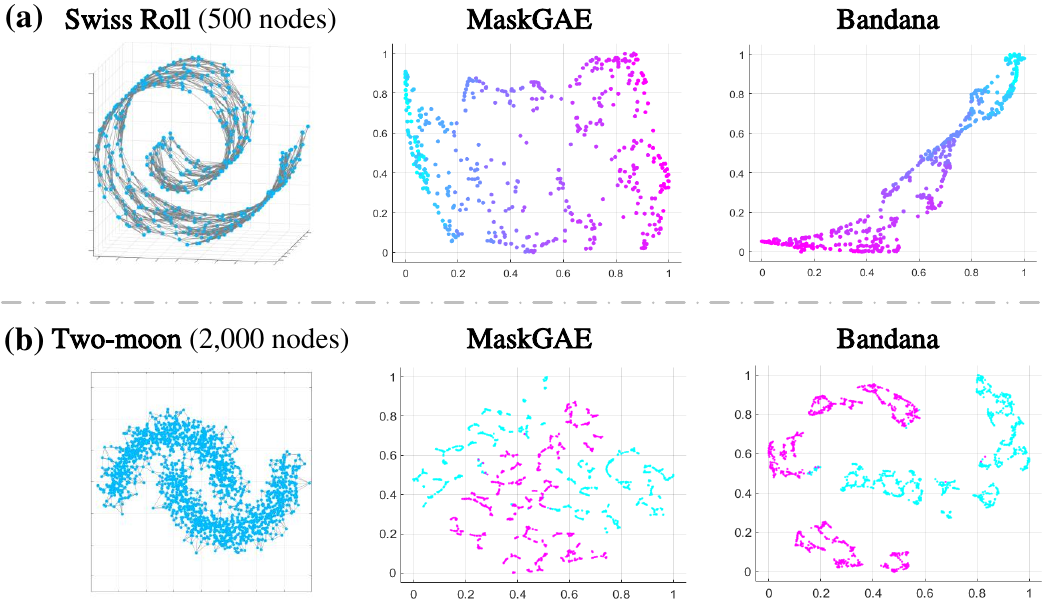}
    \caption{
    Manifold learning visualizations. \textmd{{\bf (a)} 
    The Swiss Roll. \texttt{MaskGAE} only learns suboptimal representations loosely scattered in the latent space, whereas \texttt{Bandana} learns a more compact surface. {\bf (b)} The Two-moon.
    While \texttt{MaskGAE} does not give informative results, \texttt{Bandana} successfully learns the crescent-shaped topology.}
    }
    \label{fig:manifold}
\end{figure}

\begin{table*}
\caption{\small 
Micro-F1(\%) and Macro-F1(\%) of node classification. \textmd{Best results in each column are in {\bf bold}. ``Avg. Rank'' stands for the average rank. ``OOM'' stands for ``Out-Of-Memory'' on a 24GB GPU.}
}
\label{table:nodeclas}
\small
\begin{center}
\adjustbox{max width=1.0\textwidth}{
\begin{tabular}{ccp{0.9in}ccccccccc}
\toprule[1.2pt]
Micro-F1 & \multirow{2}{*}{Year} & \multirow{2}{*}{
Model} & \multirow{2}{*}{Cora} & \multirow{2}{*}{CiteSeer} & \multirow{2}{*}{PubMed}& \multirow{2}{*}{Photo} & \multirow{2}{*}{Computers} & \multirow{2}{*}{CS} & \multirow{2}{*}{Physics} & \multirow{2}{*}{ogbn-arxiv} & \multirow{2}{*}{Avg. Rank} \\[-0.2em]
Macro-F1 & & & & & & & & \\[-0.1em]
\midrule
\multirow{4}{*}{\makecell{Traditional\\Autoencoder}} & \multirow{2}{*}{2016} & \multirow{2}{*}   {\texttt{GAE}$^\dagger$~\cite{GAE/VGAE}}& 80.15 ± 0.34 & 69.79 ± 0.36 & 80.51 ± 0.53 & 91.07 ± 0.09 & 87.92 ± 0.12 & 90.46 ± 0.29 & 93.04 ± 0.03 & 69.58 ± 0.32 & \multirow{2}{*}{10.3} \\ [-0.2em]
&& & 78.44 ± 0.90 & 62.64 ± 0.58 & 79.62 ± 0.38 & 89.83 ± 0.13 & 86.02 ± 0.25 & 89.75 ± 0.32 & 91.02 ± 0.04 & 48.25 ± 0.53 & \\ [-0.2em]
\arrayrulecolor[RGB]{180, 180, 180}\cmidrule(){2-12}\arrayrulecolor[RGB]{0, 0, 0}  
& \multirow{2}{*}{2018} &\multirow{2}{*}{\texttt{ARGA}$^\dagger$~\cite{ARGA/ARVGA}} & 77.93 ± 0.59 & 68.55 ± 0.34 & 77.78 ± 0.63 & 92.77 ± 0.26 & 88.11 ± 0.08 & 92.46 ± 0.14 & 94.32 ± 0.04 & 69.81 ± 0.27 & \multirow{2}{*}{9.3} \\[-0.2em]
&& & 76.89 ± 0.51 & 63.33 ± 0.68 & 76.54 ± 0.82 & 91.60 ± 0.10 & 86.34 ± 0.15 & 90.61 ± 0.19 & 92.58 ± 0.07 & 47.89 ± 0.45 & 
\\[-0.2em] \cmidrule(){1-12} 
\multirow{6}{*}{\makecell{Variational\\Autoencoder}} & \multirow{2}{*}{2016} &\multirow{2}{*}{\texttt{VGAE}$^\dagger$~\cite{GAE/VGAE}} & 76.30 ± 0.49 & 58.85 ± 0.79 & 75.73 ± 0.22 & 89.58 ± 0.20 & 84.99 ± 0.19 & 92.33 ± 0.07 & 94.40 ± 0.07 & 69.94 ± 0.30 & \multirow{2}{*}{12.3} \\[-0.2em]
&& & 74.70 ± 0.60 & 52.80 ± 0.91 & 75.39 ± 0.24 & 86.61 ± 0.40 & 82.26 ± 0.30 & 89.09 ± 0.16 & 92.28 ± 0.11 & 47.05 ± 0.79 &
\\[-0.2em] \arrayrulecolor[RGB]{180, 180, 180}\cmidrule(){2-12}\arrayrulecolor[RGB]{0, 0, 0}
& \multirow{2}{*}{2018} &\multirow{2}{*}{\texttt{ARVGA}$^\dagger$~\cite{ARGA/ARVGA}} & 76.85 ± 0.88 & 54.73 ± 0.46 & 73.06 ± 0.42 & 89.51 ± 0.23 & 85.03 ± 0.15 & 92.56 ± 0.09 & 93.64 ± 0.08 & 69.39 ± 0.36  & \multirow{2}{*}{12.5} \\[-0.2em]
&& & 75.15 ± 0.95 & 48.71 ± 1.27 & 73.49 ± 0.44 & 86.88 ± 0.40 & 81.54 ± 0.36 & 89.67 ± 0.23 & 91.08 ± 0.15 & 47.34 ± 0.59  &
\\[-0.2em] \arrayrulecolor[RGB]{180, 180, 180}\cmidrule(){2-12}\arrayrulecolor[RGB]{0, 0, 0} 
& \multirow{2}{*}{2023} &\multirow{2}{*}{\texttt{SeeGera}~\cite{SeeGera}} & 83.95 ± 0.55 & 72.11 ± 1.26 & 79.55 ± 0.29 & 90.13 ± 0.57 & 88.39 ± 0.26* & 88.79 ± 0.93 & \multirow{2}{*}{OOM} & \multirow{2}{*}{OOM} & \multirow{2}{*}{9.6} \\[-0.2em]
&& & 82.88 ± 0.66 & 68.48 ± 0.86 & 78.36 ± 0.44 & 87.76 ± 1.01 & & 85.38 ± 1.62 & & &
\\[-0.2em] \cmidrule(){1-12} 
\multirow{8}{*}{\makecell{Contrastive \&\\Non-contrastive}} &
\multirow{2}{*}{2020} &\multirow{2}{*}{\texttt{GRACE}~\cite{GRACE}} & 80.95 ± 0.29 & 70.39 ± 0.46 & 83.55 ± 0.44 & 92.12 ± 0.14 & 87.68 ± 0.15 & 91.90 ± 0.01 & \multirow{2}{*}{OOM} & \multirow{2}{*}{OOM} & \multirow{2}{*}{8.3} \\[-0.2em]
&& & 79.20 ± 0.44 & 68.15 ± 0.32 & 83.29 ± 0.20 & 90.99 ± 0.36 & 85.82 ± 0.27 & 89.09 ± 0.01 & & &
\\[-0.2em] \arrayrulecolor[RGB]{180, 180, 180}\cmidrule(){2-12}\arrayrulecolor{} 
& \multirow{2}{*}{2021} & \multirow{2}{*}{\texttt{GCA}~\cite{GCA}} & 81.92 ± 0.17 & 71.60 ± 0.27 & {\bf 84.08 ± 0.16} & 92.39 ± 0.20 & 87.14 ± 0.15 & 92.61 ± 0.06 & \multirow{2}{*}{OOM} & \multirow{2}{*}{OOM} & \multirow{2}{*}{6.4} \\[-0.2em]
&& & 80.76 ± 0.35 & {\bf 68.79 ± 0.37} & {\bf 83.70 ± 0.29} & 91.17 ± 0.30 & 85.10 ± 0.31 & 90.64 ± 0.16 & & &
\\[-0.2em] \arrayrulecolor[RGB]{180, 180, 180}\cmidrule(){2-12}\arrayrulecolor[RGB]{0, 0, 0} 
& \multirow{2}{*}{2022} &\multirow{2}{*}{\texttt{COSTA}~\cite{COSTA}} & 84.60 ± 0.20 & 72.57 ± 0.31 & 83.76 ± 0.03 & 90.98 ± 0.00 & 87.35 ± 0.08 & 92.48 ± 0.05 & 95.31 ± 0.04 & \multirow{2}{*}{OOM} & \multirow{2}{*}{6.4} \\[-0.2em]
&& & 82.50 ± 0.21 & 66.11 ± 0.29 & 83.16 ± 0.02 & 88.22 ± 0.00 & 85.99 ± 0.13 & 89.32 ± 0.08 & 93.90 ± 0.05 & &
\\[-0.2em] \arrayrulecolor[RGB]{180, 180, 180}\cmidrule(){2-12}\arrayrulecolor[RGB]{0, 0, 0} 
& \multirow{2}{*}{2021} &\multirow{2}{*}{\texttt{CCA-SSG}~\cite{CCA-SSG}} & 83.96 ± 0.38 & 73.45 ± 0.44 & 81.81 ± 0.53 & 92.87 ± 0.36 & 88.61 ± 0.29 & 93.01 ± 0.29 & 95.31 ± 0.07 & 69.52 ± 0.09 & \multirow{2}{*}{5.1} \\[-0.2em]
&& & 83.01 ± 0.48 & 68.75 ± 0.51 & 81.31 ± 0.51 & 91.69 ± 0.49 & 87.24 ± 0.52 & 90.73 ± 0.51 & 93.76 ± 0.10 & 47.39 ± 0.51 &
\\[-0.2em] \cmidrule(){1-12} 
\multirow{4}{*}{\textsc{FeatRec}} &
\multirow{2}{*}{2022} &\multirow{2}{*}{\texttt{GraphMAE}~\cite{GraphMAE}} & 84.05 ± 0.59 & 73.06 ± 0.37 & 80.98 ± 0.47 & 92.92 ± 0.40 & 89.24 ± 0.45 & 93.09 ± 0.14 & {\bf 95.65 ± 0.07} & 71.30 ± 0.24 & \multirow{2}{*}{3.6} \\[-0.2em]
&& & {\bf 83.07 ± 0.53} & 67.78 ± 0.85 & 80.26 ± 0.48 & 91.93 ± 0.47 & {\bf 88.12 ± 0.78} & {\bf 91.42 ± 0.15} & 94.19 ± 0.09 & {\bf 51.15 ± 0.15} &
\\[-0.2em] \arrayrulecolor[RGB]{180, 180, 180}\cmidrule(){2-12}\arrayrulecolor[RGB]{0, 0, 0} 
& \multirow{2}{*}{2023} &\multirow{2}{*}{\texttt{GraphMAE2}~\cite{GraphMAE2}} & 83.84 ± 0.54 & 73.48 ± 0.34 & 81.34 ± 0.44 & 93.30 ± 0.20 & 89.01 ± 1.53 & 91.31 ± 0.07 & 95.25 ± 0.05 & {\bf 71.82 ± 0.00} & \multirow{2}{*}{5.2} \\[-0.2em]
&& & 82.80 ± 0.46 & 68.70 ± 0.42 & 80.68 ± 0.43 & 92.19 ± 0.24 & 87.63 ± 1.79 & 88.89 ± 0.13 & 93.78 ± 0.07 & 50.42 ± 0.00 &
\\[-0.2em] \cmidrule(){1-12} 
\multirow{8}{*}{\textsc{TopoRec}} &
\multirow{2}{*}{2023} &\multirow{2}{*}{\texttt{S2GAE}~\cite{S2GAE}} & 78.34 ± 0.96 & 65.31 ± 0.64 & 80.11 ± 0.52 & 91.43 ± 0.07 & 85.31 ± 0.07 & 90.47 ± 0.07 & 93.98 ± 0.06 & 67.77 ± 0.36 & \multirow{2}{*}{11.8} \\[-0.2em]
&& & 77.44 ± 0.86 & 62.54 ± 0.64 & 79.04 ± 0.47 & 90.47 ± 0.15 & 81.48 ± 0.18 & 87.69 ± 0.11 & 91.95 ± 0.08 & 36.41 ± 0.24 &
\\[-0.2em] \arrayrulecolor[RGB]{180, 180, 180}\cmidrule(){2-12}\arrayrulecolor[RGB]{0, 0, 0}  
& \multirow{2}{*}{2023} &\multirow{2}{*}{\texttt{MaskGAE}-edge~\cite{MaskGAE}} & 83.33 ± 0.15 & 72.02 ± 0.46 & 82.33 ± 0.39 & 93.28 ± 0.08 & 89.42 ± 0.15 & 92.29 ± 0.25 & 95.10 ± 0.04 & 70.95 ± 0.29 & \multirow{2}{*}{5.9} \\[-0.2em]
&& & 82.60 ± 0.24 & 66.36 ± 0.63 & 81.63 ± 0.41 & 92.04 ± 0.08 & 88.00 ± 0.14 & 90.17 ± 0.34 & 93.48 ± 0.04 & 49.37 ± 0.45 &
\\[-0.2em] \arrayrulecolor[RGB]{180, 180, 180}\cmidrule(){2-12}\arrayrulecolor[RGB]{0, 0, 0}  
& \multirow{2}{*}{2023} &\multirow{2}{*}{\texttt{MaskGAE}-path~\cite{MaskGAE}} & 82.54 ± 0.16 & 72.32 ± 0.39 & 82.80 ± 0.22 & 93.29 ± 0.10 & 89.40 ± 0.10 & 92.54 ± 0.21 & 95.15 ± 0.11 & 71.22 ± 0.40 & \multirow{2}{*}{5.4} \\[-0.2em]
&& & 81.84 ± 0.26 & 65.77 ± 0.40 & 82.23 ± 0.23 & 92.16 ± 0.17 & 87.69 ± 0.15 & 90.25 ± 0.31 & 93.51 ± 0.03 & 49.99 ± 0.54 &
\\[-0.2em] \arrayrulecolor[RGB]{180, 180, 180}\cmidrule(){2-12}\arrayrulecolor[RGB]{0, 0, 0} 
\rowcolor{gray!25}
\cellcolor{white} && & {\bf 84.62 ± 0.37} & {\bf 73.60 ± 0.16} & 83.53 ± 0.51 & {\bf 93.44 ± 0.11} & {\bf 89.62 ± 0.09} & {\bf 93.10 ± 0.05} & 95.57 ± 0.04 & 71.09 ± 0.24 & \\[-0.2em]
\rowcolor{gray!25}
\cellcolor{white} & \multirow{-2}{*}{2024} &\multirow{-2}{*}{\texttt{Bandana}} & 82.97 ± 0.92 & 68.11 ± 0.48 & 82.99 ± 0.40 & {\bf 92.26 ± 0.04} & 87.79 ± 0.20 & 91.02 ± 0.13 & {\bf 94.20 ± 0.05} & 49.66 ± 0.50 &  \multirow{-2}{*}{\bf 2.4}\\[-0.2em]
\bottomrule[1.2pt]
\multicolumn{12}{l}{\small *We obtain a much lower score for \texttt{SeeGera} on Computers than the official one. We report the Micro-F1 from the original paper~\cite{SeeGera} instead.} 
\end{tabular}}
\end{center}
\end{table*}

\subsection{Learning Topological Manifolds (RQ1)}\label{5.2}

We verify \texttt{Bandana}'s informative representation learning ability by performing manifold learning on two undirected synthetic datasets with structures: {\it Swiss Roll}, a curved surface on $\mathbb{R}^3$; and {\it Two-moon}, two interleaved crescent-shaped clusters on $\mathbb{R}^2$. We assign a column of the identity matrix $\mathbf{I}_n$ to each node as features with no topological information. 
The latent representation spaces are learned by \texttt{MaskGAE} and \texttt{Bandana} respectively (each model is trained till the early stopping) and visualized by t-SNE~\cite{t-SNE}.
It is obvious in Figure \ref{fig:manifold} that \texttt{Bandana} is more topologically informative than \texttt{MaskGAE}.

\setlength{\fboxsep}{0.5pt}
\begin{table*}
\setlength{\tabcolsep}{6pt}
\caption{\small 
AUC(\%) and AP(\%) of link prediction. \textmd{Best results in each column are in {\bf bold}. ``OOM'' stands for ``Out-Of-Memory'' on a 24GB GPU.}
}
\label{table:linkpred}
\small
\begin{center}
\adjustbox{max width=1.0\textwidth}{
\begin{tabular}{ccp{0.9in}cccccccc}
\toprule[1.2pt]
\makecell{AUC\\AP} 
& Year & Model & Cora & CiteSeer & PubMed & Photo & Computers & CS & Physics 
& Avg. Rank \\[-0.2em]
\midrule
\multirow{4}{*}{\makecell{Traditional\\Autoencoder}} & \multirow{2}{*}{2016} & \multirow{2}{*}{\texttt{GAE}~\cite{GAE/VGAE}} & 94.66 ± 0.26 & 95.19 ± 0.45 & 94.58 ± 1.12 & 71.45 ± 0.95 & 70.99 ± 1.03 & 93.78 ± 0.36 & 88.88 ± 1.11 & 
\multirow{2}{*}{9.1} 
\\ [-0.2em]
&& & 94.22 ± 0.39 & 95.70 ± 0.31 & 94.26 ± 1.65 & 65.99 ± 0.96 & 67.88 ± 0.82 & 89.87 ± 0.59 & 82.45 ± 1.59
\\[-0.2em]
\arrayrulecolor[RGB]{180, 180, 180}\cmidrule(){2-11}\arrayrulecolor[RGB]{0, 0, 0}  
& \multirow{2}{*}{2018} & \multirow{2}{*}{\texttt{ARGA}~\cite{ARGA/ARVGA}} & 94.76 ± 0.18 & 95.68 ± 0.35 & 94.12 ± 0.08 & 85.42 ± 0.79 & 67.09 ± 3.93 & 95.49 ± 0.17 & 90.70 ± 1.08 & 
\multirow{2}{*}{7.0}
\\[-0.2em]
&& & 94.93 ± 0.20 & 96.34 ± 0.25 & 94.19 ± 0.08 & 80.58 ± 1.40 & 62.53 ± 3.17 & 92.56 ± 0.33 & 89.37 ± 1.16
\\[-0.2em] \cmidrule(){1-11} 
\multirow{8}{*}{\makecell{Variational\\Autoencoder}} & \multirow{2}{*}{2016} & \multirow{2}{*}{\texttt{VGAE}~\cite{GAE/VGAE}} & 91.24 ± 0.48 & 94.55 ± 0.48 & 95.46 ± 0.04 & 95.61 ± 0.05 & 92.69 ± 0.03 & 87.34 ± 0.43 & 89.27 ± 0.83 & 
\multirow{2}{*}{6.7} 
\\[-0.2em]
&& & 92.27 ± 0.43 & 95.34 ± 0.37 & 94.29 ± 0.07 & 94.63 ± 0.06 & 88.27 ± 0.08 & 80.24 ± 0.55 & 82.79 ± 1.14 
\\[-0.2em] \arrayrulecolor[RGB]{180, 180, 180}\cmidrule(){2-11}\arrayrulecolor[RGB]{0, 0, 0}
& \multirow{2}{*}{2018} & \multirow{2}{*}{\texttt{ARVGA}~\cite{ARGA/ARVGA}} & 91.35 ± 0.87 & 94.47 ± 0.33 & 96.17 ± 0.21 & 95.44 ± 0.14 & 92.38 ± 0.15 & 87.39 ± 0.37 & 88.96 ± 0.96 & 
\multirow{2}{*}{7.0}
\\[-0.2em]
&& & 91.98 ± 0.85 & 95.21 ± 0.33 & 94.81 ± 0.41 & 94.49 ± 0.12 & 88.49 ± 0.33 & 80.31 ± 0.49 & 82.38 ± 1.31
\\[-0.2em] \arrayrulecolor[RGB]{180, 180, 180}\cmidrule(){2-11}\arrayrulecolor[RGB]{0, 0, 0} 
& \multirow{2}{*}{2019} & \multirow{2}{*}{\texttt{SIG-VAE}~\cite{SIG-VAE}} & 90.36 ± 1.34 & 88.85 ± 0.69 & \multirow{2}{*}{OOM} & \multirow{2}{*}{OOM} & \multirow{2}{*}{OOM} & \multirow{2}{*}{OOM} & \multirow{2}{*}{OOM} & 
\multirow{2}{*}{11.0}
\\[-0.2em]
&& & 91.36 ± 1.16 & 90.27 ± 0.73 & & & & &
\\[-0.2em] \arrayrulecolor[RGB]{180, 180, 180}\cmidrule(){2-11}\arrayrulecolor[RGB]{0, 0, 0} 
& \multirow{2}{*}{2023} & \multirow{2}{*}{\texttt{SeeGera}~\cite{SeeGera}} & 95.49 ± 0.70 & 94.61 ± 1.05 & 95.19 ± 3.94 & 95.25 ± 1.19 & 96.53 ± 0.16 & 95.73 ± 0.70 & \multirow{2}{*}{OOM} & 
\multirow{2}{*}{3.8}
\\[-0.2em]
&& & {\bf 95.90 ± 0.64} & 96.40 ± 0.89 & 94.60 ± 4.17 & 94.04 ± 1.18 & 96.33 ± 0.16 & 93.17 ± 0.53
\\[-0.2em] \cmidrule(){1-11} 
\multirow{8}{*}{\makecell{Contrastive \&\\Non-contrastive}} &
\multirow{2}{*}{2020} & \multirow{2}{*}{\texttt{GRACE}$^\dagger$~\cite{GRACE}} & 81.80 ± 0.45 & 84.78 ± 0.38 & 93.11 ± 0.37 & 88.64 ± 1.17 & 89.97 ± 0.25 & 87.67 ± 0.10 & \multirow{2}{*}{OOM} & 
\multirow{2}{*}{9.2}
\\[-0.2em]
&& & 82.02 ± 0.50 & 82.85 ± 0.36 & 92.88 ± 0.30 & 83.85 ± 4.15 & 92.15 ± 0.43 & 94.87 ± 0.02 &
\\[-0.2em] \arrayrulecolor[RGB]{180, 180, 180}\cmidrule(){2-11}\arrayrulecolor{} 
& \multirow{2}{*}{2021} & \multirow{2}{*}{\texttt{GCA}$^\dagger$~\cite{GCA}} & 81.91 ± 0.76 & 84.72 ± 0.28 & 94.33 ± 0.67 & 89.61 ± 1.46 & 90.67 ± 0.30 & 88.05 ± 0.00 & \multirow{2}{*}{OOM} & 
\multirow{2}{*}{8.7}
\\[-0.2em]
&& & 80.51 ± 0.71 & 81.57 ± 0.22 & 93.13 ± 0.62 & 86.53 ± 3.00 & 90.50 ± 0.63 & 94.94 ± 0.37
\\[-0.2em] \arrayrulecolor[RGB]{180, 180, 180}\cmidrule(){2-11}\arrayrulecolor[RGB]{0, 0, 0} 
& \multirow{2}{*}{2021} & \multirow{2}{*}{\texttt{CCA-SSG}$^\dagger$~\cite{CCA-SSG}} & 67.54 ± 1.30 & 78.88 ± 2.73 & 74.97 ± 0.28 & 91.04 ± 2.98 & 83.85 ± 1.35 & 83.54 ± 0.98 & 77.40 ± 0.08 & 
\multirow{2}{*}{12.4}
\\[-0.2em]
&& & 72.74 ± 1.18 & 77.42 ± 4.56 & 77.11 ± 0.26 & 89.68 ± 3.85 & 84.04 ± 1.74 & 78.66 ± 1.06 & 73.33 ± 0.10
\\[-0.2em] \arrayrulecolor[RGB]{180, 180, 180}\cmidrule(){2-11}\arrayrulecolor[RGB]{0, 0, 0} 
& \multirow{2}{*}{2023} & \multirow{2}{*}{\texttt{T-BGRL}~\cite{T-BGRL}} & 73.18 ± 0.54 & 78.11 ± 0.48 & 76.21 ± 0.18 & 80.80 ± 0.04 & 84.60 ± 0.05 & 70.08 ± 0.12 & 89.18 ± 0.04 &
\multirow{2}{*}{11.4} 
\\[-0.2em]
&& & 76.81 ± 0.73 & 83.15 ± 0.47 & 80.99 ± 0.13 & 84.34 ± 0.06 & 86.85 ± 0.05 & 79.50 ± 0.09 & 84.30 ± 0.05
\\[-0.2em] \cmidrule(){1-11} 
\multirow{4}{*}{\textsc{FeatRec}} &
\multirow{2}{*}{2022} & \multirow{2}{*}{\texttt{GraphMAE}$^\dagger$~\cite{GraphMAE}} & 93.02 ± 0.53 & 95.21 ± 0.47 & 87.54 ± 1.06 & 75.08 ± 1.24 & 71.27 ± 0.89 & 92.45 ± 4.18 & 85.03 ± 7.16 & \multirow{2}{*}{10.3} 
\\[-0.2em]
&& & 91.40 ± 0.59 & 94.42 ± 0.67 & 86.93 ± 1.01 & 70.04 ± 1.12 & 66.84 ± 1.10 & 91.67 ± 4.17 & 82.46 ± 9.33 &
\\[-0.2em] \arrayrulecolor[RGB]{180, 180, 180}\cmidrule(){2-11}\arrayrulecolor[RGB]{0, 0, 0} 
& \multirow{2}{*}{2023} & \multirow{2}{*}{\texttt{GraphMAE2}$^\dagger$~\cite{GraphMAE2}} & 93.26 ± 1.00 & 95.26 ± 0.14 & 90.85 ± 0.91 & 73.03 ± 2.24 & 72.20 ± 2.09 & 94.57 ± 0.32 & 94.56 ± 0.81 & 
\multirow{2}{*}{8.4}
\\[-0.2em]
&& & 91.65 ± 0.98 & 94.36 ± 0.20 & 90.37 ± 0.92 & 68.77 ± 1.50 & 67.97 ± 1.52 & 92.76 ± 0.54 & 93.86 ± 1.09 &
\\[-0.2em] \cmidrule(){1-11} 
\multirow{8}{*}{\textsc{TopoRec}} &
\multirow{2}{*}{2023} & \multirow{2}{*}{\texttt{S2GAE}~\cite{S2GAE}} & 89.27 ± 0.33 & 86.35 ± 0.42 & 89.53 ± 0.23 & 86.80 ± 2.85 & 84.16 ± 4.82 & 86.60 ± 1.06 & 88.92 ± 1.24 & 
\multirow{2}{*}{10.1}
\\[-0.2em]
&& & 89.78 ± 0.22 & 87.38 ± 0.29 & 88.68 ± 0.33 & 80.56 ± 3.74 & 78.13 ± 6.58 & 82.93 ± 1.63 & 88.20 ± 1.34 &
\\[-0.2em] \arrayrulecolor[RGB]{180, 180, 180}\cmidrule(){2-11}\arrayrulecolor[RGB]{0, 0, 0}  
& \multirow{2}{*}{2023} & \multirow{2}{*}{\texttt{MaskGAE}-edge~\cite{MaskGAE}} & 95.66 ± 0.16 & 97.02 ± 0.27 & 96.51 ± 0.82 & 81.12 ± 0.45 & 76.23 ± 3.13 & 92.41 ± 0.44 & 91.94 ± 0.37 & \multirow{2}{*}{5.9}
\\[-0.2em]
&& & 94.65 ± 0.24 & 96.89 ± 0.45 & 96.08 ± 0.68 & 77.11 ± 0.40 & 71.71 ± 2.90 & 87.16 ± 0.69 & 86.33 ± 0.55
\\[-0.2em] \arrayrulecolor[RGB]{180, 180, 180}\cmidrule(){2-11}\arrayrulecolor[RGB]{0, 0, 0}  
& \multirow{2}{*}{2023} & \multirow{2}{*}{\texttt{MaskGAE}-path~\cite{MaskGAE}} & 95.47 ± 0.25 & {\bf 97.21 ± 0.17} & 97.19 ± 0.18 & 80.46 ± 0.34 & 73.24 ± 1.26 & 87.96 ± 0.44 & 86.19 ± 0.36 & 
\multirow{2}{*}{7.4} 
\\[-0.2em]
&& & 94.64 ± 0.25 & 97.02 ± 0.32 & 96.69 ± 0.19 & 76.56 ± 0.55 & 70.94 ± 1.26 & 80.84 ± 0.58 & 78.55 ± 0.45
\\[-0.2em] \arrayrulecolor[RGB]{180, 180, 180}\cmidrule(){2-11}\arrayrulecolor[RGB]{0, 0, 0} 
\rowcolor{gray!25}
\cellcolor{white} && & {\bf 95.71 ± 0.12} & 96.89 ± 0.21 & {\bf 97.26 ± 0.16} & {\bf 97.24 ± 0.11} & {\bf 97.33 ± 0.06} & {\bf 97.42 ± 0.08} & {\bf 97.02 ± 0.04} &
\\[-0.2em]
\rowcolor{gray!25}
\cellcolor{white} & \multirow{-2}{*}{2024} & \multirow{-2}{*}{\texttt{Bandana}} & 95.25 ± 0.16 & {\bf 97.16 ± 0.17} & {\bf 96.74 ± 0.38} & {\bf 96.79 ± 0.15} & {\bf 96.91 ± 0.09} & {\bf 97.09 ± 0.15} & {\bf 96.67 ± 0.05} & 
\multirow{-2}{*}{\bf 1.2}
\\[-0.2em]
\bottomrule[1.2pt]
\end{tabular}}
\end{center}
\end{table*}

\subsection{Comparison on Node Classification (RQ2)}\label{5.3}

Similar to other self-supervised models~\cite{DGI, CCA-SSG, MAE, GraphMAE, MaskGAE}, \texttt{Bandana} follows the {\bf linear probing} setup to evaluate. That is, we use the pre-trained encoder's output representations to train a Xavier-initialized~\cite{Xavier} linear layer.

We report two classic metrics, Micro-F1 and Macro-F1, in Table \ref{table:nodeclas}. It is evident that \texttt{Bandana} achieves competitive performance with state-of-the-art contrastive methods and \textsc{FeatRec}s, but only with a structure-learning pretext. As indicated by the Avg. Rank, the performance of discrete \textsc{TopoRec}s (\texttt{S2GAE}, \texttt{MaskGAE}) in node classification tasks is difficult to emulate the dominant \textsc{FeatRec}s (\texttt{GraphMAE}, \texttt{GraphMAE2}). However, 
\texttt{Bandana} surpasses both settings of \texttt{MaskGAE} on 7/8 datasets, surpasses \texttt{COSTA} (one of the most advanced contrastive frameworks) on 6/7 datasets, and outperforms \texttt{GraphMAE} and \texttt{GraphMAE2} by 1.2 and 2.8 ranks, respectively.
Our work, perhaps surprisingly, shows that fine-grained topological learning can uncover the close relationship between the graph structure and the intrinsic characteristics of node features. 

\begin{table}
\setlength{\tabcolsep}{2pt}
\caption{\small 
Effect of the masking strategies on the average node classification accuracy (\%). \textmd{Best results in each column are in {\bf bold}. Second-best results in each column are \underline{underlined}.}
}
\label{table:maskablation}
\begin{center}
\adjustbox{max width=0.48\textwidth}{
\begin{tabular}{p{1.8in}ccc}
\toprule[1.2pt]
Variants & Cora & CiteSeer & PubMed \\
\midrule
Bernoulli & 79.16 ± 0.15 & 68.60 ± 0.90 & 82.67 ± 0.40 \\
Uniform & 81.36 ± 0.20 & 70.25 ± 0.55 & 81.84 ± 0.47 \\
Truncated Gaussian & 79.34 ± 0.46 & 69.95 ± 0.25 & 82.00 ± 0.56 \\
Boltzmann-Gibbs & \underline{84.02 ± 0.09} & \underline{72.45 ± 0.42} & \underline{83.31 ± 0.38} \\
Boltzmann-Gibbs, LWM & 82.38 ± 0.19 & 70.75 ± 0.55 & 81.70 ± 0.57 \\
\rowcolor{gray!25}
\texttt{Bandana} (Boltzmann-Gibbs, LWP) & {\bf 84.62 ± 0.37} & {\bf 73.60 ± 0.16} & {\bf 83.53 ± 0.51} \\
\bottomrule[1.2pt]
\end{tabular}}
\end{center}
\end{table}

\subsection{Comparison on Link Prediction (RQ3)}\label{5.4}
Unlike node classification, our evaluation of link prediction is {\it different from the old routine}. Previous \textsc{TopoRec}s directly performed link prediction in an end-to-end manner without probing or fine-tuning since they do the exact same thing for pre-training. However, it is not a self-supervised case and hence not suitable for evaluating self-supervised models. Thus, we utilize a fairer evaluation scheme called {\bf dot-product probing}, which replaces the original MLP decoder with a dot-product operator $\mathbf{A}_\text{recon}=\sigma(\mathbf{Z}\mathbf{Z}^\top)$, as \texttt{SeeGera} does~\cite{SeeGera}. We 
employ the dot-product probing instead of the end-to-end training for \texttt{Bandana} {\it as well as all baselines} 
(note that this may lead to some discordance between our results and those officially reported). 
Further analyses of the dot-product probing can be found in Appendix \ref{E.2}.

We report Area Under the ROC curve (AUC) and Average Precision (AP) in Table \ref{table:linkpred}. 
We have several observations. (\lowercase{\romannumeral 1}) Despite no longer using link prediction for pre-training, \texttt{Bandana} still achieves the best link prediction results. In particular, it
greatly outperforms the performance of \texttt{MaskGAE} by 20\% on Computers.
(\lowercase{\romannumeral 2}) \texttt{Bandana} gains over 3\%-10\% improvement compared to the best contrastive results. From the Avg. Rank, the performance of contrastive methods under dot-product probing is less than satisfactory, even for the advanced link prediction model \texttt{T-BGRL}, because they do not explicitly learn graph structures while pre-training.
(\lowercase{\romannumeral 3}) \textsc{FeatRec}s (\texttt{GraphMAE}, \texttt{GraphMAE2}) do not perform as well as \textsc{TopoRec}s and even traditional autoencoders (\texttt{GAE}, \texttt{ARGA}), since they only pay attention to node features. (\lowercase{\romannumeral 4}) Some contrastive methods and variational autoencoders 
require more memory for large graphs. This highlights the lightweight property of \texttt{TopoRec}s. 

\subsection{Ablation Study of Masking Strategies (RQ4)}\label{5.5}

We have analyzed the strengths of Boltzmann-Gibbs bandwidths and the layer-wise strategy in Section \ref{4.1}. To validate these strengths, we experiment with different distributions, including the discrete Bernoulli distribution $M_{ij} \sim Bernoulli(1-p)$, uniform distribution $M_{ij} \sim U(0, 2-2p) (p>0.5)$, truncated Gaussian distribution $\scriptsize M_{ij} \sim \psi(1-p,1,0,2-2p) (p>0.5)$\footnote{$\psi(\mu,\sigma^2,a,b)$ denotes a Gaussian distribution $\mathcal{N}(\mu,\sigma^2)$ truncated within the interval $[a,b]$ where $-\infty<a<b<+\infty$.}, and the Boltzmann-Gibbs distribution in \cref{eq:bandwidth} 
(we ensure that masks sampled from these distributions have the same mask ratio $p$). 
These variants only feed the output-layer representations into the decoder. The model employing layer-wise masking only (each layer uses an independent mask set but only the last layer performs the prediction) is referred to as LWM, while the one with both layer-wise masking and prediction is referred to as LWP. 
It is obvious from Table \ref{table:maskablation} that the model with Boltzmann-Gibbs bandwidths outperforms all models with different mask distributions. Furthermore, \texttt{Bandana}'s setting obtains the best node classification performance on all three datasets. Note that the model with LWM only learns suboptimal representations because it only attempts to predict one set of masks while multiple different sets are used.

\section{Conclusion}

This work firstly discusses two limitations in the message propagation of existing discrete \textsc{TopoRec}s, which induce the insufficiency of learning topologically informative representations. To address the issues, we explore non-discrete masking by a novel bandwidth masking and reconstruction scheme. We present our masked graph autoencoder \texttt{Bandana} via the specialized Boltzmann-Gibbs masking and layer-wise prediction, and thoroughly explore its empirical and theoretical superiority. We demonstrate that \texttt{Bandana} can learn more precise graph manifolds and outperform other baselines, including the state-of-the-art contrastive methods and \textsc{FeatRec}s, on link prediction and the feature-related node classification, solely by pre-training on a structure-learning pretext. 
While \texttt{Bandana} may not represent the optimal solution, it is the first attempt to explore a new paradigm for masked graph autoencoders that diverges from the discrete mask-then-reconstruct stereotype.

%
\begin{acks}
This work is supported by National Natural Science Foundation of China under grants 62376103, 62302184, 62206102, U1936108 and Science and Technology Support Program of Hubei Province under grant 2022BAA046.
\end{acks}

\bibliographystyle{ACM-Reference-Format}
\bibliography{sample-base}

\newpage
\appendix

\section{More Theoretical Details}

This section gives the detailed proofs of propositions and theorems.

\subsection{Proof of Theorem \ref{thm:dirichlet}}\label{A.1}
\begin{proof}
It is obvious that the unmasked ego-graph $\mathcal{G}_{i,m}$ has $p(n_i - 1) + 1$ nodes. So
\begin{align}
& E_D(\mathcal{G}_{i}) - E_D(\mathcal{G}_{i,m}) \notag\\
&= \frac{n_i-1}{n_i}\|\boldsymbol{X}_i^{\mathcal{G}_{i}}-\boldsymbol{X}_j^{\mathcal{G}_{i}}\|^2 - \frac{p(n_i-1)}{p(n_i-1)+1}\|\boldsymbol{X}_i^{\mathcal{G}_{i}}-\boldsymbol{X}_j^{\mathcal{G}_{i}}\|^2 \notag\\
&= \frac{(n_i-1)(p(n_i-1)+1)-pn_i(n_i-1)}{pn_i(n_i-1)+n_i}\|\boldsymbol{X}_i^{\mathcal{G}_{i}}-\boldsymbol{X}_j^{\mathcal{G}_{i}}\|^2 \notag\\
&= \frac{(n_i-1)(1-p)}{pn_i(n_i-1)+n_i}\|\boldsymbol{X}_i^{\mathcal{G}_{i}}-\boldsymbol{X}_j^{\mathcal{G}_{i}}\|^2 \notag\\
&\ge 0
\end{align}
and $E_D(\mathcal{G}_{i}) - E_D(\mathcal{G}_{i,m}) = 0$ iff $p=1$.
\end{proof}

\subsection{Proof of Proposition \ref{thm:dae}}\label{A.2}
\begin{proof}
Under the assumption of $\tilde{\boldsymbol{T}}_{j}\sim\mathcal{N}(\boldsymbol{\mu}_{\tilde{\boldsymbol{T}}_{j}},\mathbf{\Sigma}_{\tilde{\boldsymbol{T}}_{j}})$ and $\mathbf{\Sigma}_{\tilde{\boldsymbol{T}}_{j}}=c\mathbf{I}$, $r_{\mathbf{X}}(\tilde{\boldsymbol{T}}_{j})$ is a predictor of the Gaussian mean $\boldsymbol{\mu}_{\tilde{\boldsymbol{T}}_{j}}$. As such, the negative log likelihood in \cref{eq:bandce} can be rewritten as an $\ell_2$ error of topological encoding:
\begin{align}
\mathcal{L} &= -\mathbb{E}_{j\in\mathcal{V}}[\log r_{\mathbf{X}}(\tilde{\boldsymbol{T}}_{j})] \notag\\
&= -\mathbb{E}_{j\in\mathcal{V}}\left[\log \frac{\exp\left(-\frac{1}{2}(r_{\mathbf{X}}(\tilde{\boldsymbol{T}}_{j})-\tilde{\boldsymbol{T}}_{j})^\top\mathbf{\Sigma}_{\tilde{\boldsymbol{T}}_{j}}^{-1}(r_{\mathbf{X}}(\tilde{\boldsymbol{T}}_{j})-\tilde{\boldsymbol{T}}_{j})\right)  }{(2\pi)^\frac{n}{2}\det(\mathbf{\Sigma}_{\tilde{\boldsymbol{T}}_{j}})^\frac{1}{2}} 
\right] \notag\\
&=\frac{1}{2c}\mathbb{E}_{j\in\mathcal{V}}\left[(r_{\mathbf{X}}(\tilde{\boldsymbol{T}}_{j})-\tilde{\boldsymbol{T}}_{j})^\top(r_{\mathbf{X}}(\tilde{\boldsymbol{T}}_{j})-\tilde{\boldsymbol{T}}_{j}) \right] \notag\\
& \quad\quad + \underbracket[0.8pt][3pt]{\log ((2\pi)^\frac{n}{2}\det(\mathbf{\Sigma}_{\tilde{\boldsymbol{T}}_{j}})^\frac{1}{2})}_{\boldsymbol{const}} \notag\\
&\propto \mathbb{E}_{j\in\mathcal{V}}[\| r_{\mathbf{X}}(\tilde{\boldsymbol{T}}_{j}) - \tilde{\boldsymbol{T}}_{j} \|^2]
\end{align}
expanding $r_{\mathbf{X}}(\cdot)$ with the first-order Taylor series yields
\begin{equation}
r_{\mathbf{X}}(\tilde{\boldsymbol{T}}_{j}) = r_{\mathbf{X}}(\boldsymbol{T}_{j} + \epsilon) = r_{\mathbf{X}}(\boldsymbol{T}_{j}) + \nabla r_{\mathbf{X}}(\boldsymbol{T}_{j})\epsilon + o(\epsilon^\top\epsilon)
\end{equation}
and we have
\begin{align}
\mathcal{L} &= \mathbb{E}_{j\in\mathcal{V}}[\| r_{\mathbf{X}}(\boldsymbol{T}_{j}) + \nabla r_{\mathbf{X}}(\boldsymbol{T}_{j})\epsilon - (\boldsymbol{T}_{j} + \epsilon) + o(\epsilon^\top\epsilon) \|^2] \notag\\
&= \mathbb{E}_{j\in\mathcal{V}}[\| (r_{\mathbf{X}}(\boldsymbol{T}_{j}) - \boldsymbol{T}_{j}) + (\nabla r_{\mathbf{X}}(\boldsymbol{T}_{j})\epsilon - \epsilon)\|^2] + o(\sigma_{\epsilon}^2) \notag\\
&= \mathbb{E}_{j\in\mathcal{V}}[\| r_{\mathbf{X}}(\boldsymbol{T}_{j}) - \boldsymbol{T}_{j} \|^2] \notag\\
& \quad + 2 \mathbb{E}_{j\in\mathcal{V}}[\epsilon]^\top\mathbb{E}_{j\in\mathcal{V}}[(\nabla r_{\mathbf{X}}(\boldsymbol{T}_{j}) - \mathbf{I})^\top(r_{\mathbf{X}}(\boldsymbol{T}_{j}) - \boldsymbol{T}_{j})] \notag\\
& \quad + \left(\mathbb{E}_{j\in\mathcal{V}}[\| \nabla r_{\mathbf{X}}(\boldsymbol{T}_{j})\epsilon \|^2] + \mathbb{E}_{j\in\mathcal{V}}[\epsilon^\top\epsilon] \right. \notag\\
& \left. \quad - 2 \mathbb{E}_{j\in\mathcal{V}}[\epsilon]^\top\mathbb{E}_{j\in\mathcal{V}}[\nabla r_{\mathbf{X}}(\boldsymbol{T}_{j})] 
\vphantom{\mathbb{E}_{j\in\mathcal{V}}[\| \nabla r_{\mathbf{X}}(\boldsymbol{T}_{j})\epsilon \|^2] + \mathbb{E}_{j\in\mathcal{V}}[\epsilon^\top\epsilon]}\right) + o(\sigma_{\epsilon}^2) \notag\\
&= \mathbb{E}_{j\in\mathcal{V}}[\| r_{\mathbf{X}}(\boldsymbol{T}_{j}) - \boldsymbol{T}_{j} \|^2] \notag\\
&\quad+ \text{tr}(\mathbb{E}_{j\in\mathcal{V}}[\epsilon\epsilon^\top]\mathbb{E}_{j\in\mathcal{V}}[\nabla r_{\mathbf{X}}(\boldsymbol{T}_{j})^\top\nabla r_{\mathbf{X}}(\boldsymbol{T}_{j})]) \notag\\
& \quad + 2\mu_{\epsilon}^\top\mathbb{E}_{j\in\mathcal{V}}[(\nabla r_{\mathbf{X}}(\boldsymbol{T}_{j}) - \mathbf{I})^\top(r_{\mathbf{X}}(\boldsymbol{T}_{j}) - \boldsymbol{T}_{j})] \notag\\
& \quad - 2\mu_{\epsilon}^\top\mathbb{E}_{j\in\mathcal{V}}[\nabla r_{\mathbf{X}}(\boldsymbol{T}_{j})]
+ o(\sigma_{\epsilon}^2)
\end{align}
As the noise vector of each ego-graph $\mathcal{G}_i$ is a probabilistic simplex, the mean of noises over every edge in $\mathcal{G}_i$ is $1/\text{deg}(i)$. This derives the statistical mean of bandwidths on the entire graph $\mu_{\boldsymbol{\epsilon}}$ as
\begin{equation}\label{eq:noisemean}
\mu_{\boldsymbol{\epsilon}} = \frac{1}{2|\mathcal{E}|}\sum_{i\in\mathcal{V}}{\text{deg}(i)\cdot\frac{1}{\text{deg}(i)}} = \frac{n}{2|\mathcal{E}|}
\end{equation}
Therefore, $\mu_{\epsilon}\rightarrow 0$ when $n \ll 2|\mathcal{E}|$. In that case,
\usetagform{normalsize}
{\small
\begin{align}
\mathcal{L} &= \mathbb{E}_{j\in\mathcal{V}}[\| r_{\mathbf{X}}(\boldsymbol{T}_{j})\!-\!\boldsymbol{T}_{j} \|^2] + \sigma_{\epsilon}^2\text{tr}(\mathbb{E}_{j\in\mathcal{V}}[\nabla r_{\mathbf{X}}(\boldsymbol{T}_{j})^\top\nabla r_{\mathbf{X}}(\boldsymbol{T}_{j})]) + o(\sigma_{\epsilon}^2) \notag\\
&= \mathbb{E}_{j\in\mathcal{V}}[\| r_{\mathbf{X}}(\boldsymbol{T}_{j})\!-\!\boldsymbol{T}_{j} \|^2] + \sigma_{\epsilon}^2 \mathbb{E}_{j\in\mathcal{V}}[\| \nabla r_{\mathbf{X}}(\boldsymbol{T}_{j})\|^2_F] + o(\sigma_{\epsilon}^2)
\end{align}
}%
\end{proof}

\subsection{Proof of Theorem \ref{thm:gradopt}}\label{A.3}
\begin{proof}
We follow~\cite{RAETheory1} to complete the proof. From a generative perspective, one may consider the edge set of $\mathcal{G}_j$ as a sampled subset from $p(\boldsymbol{T}_j)$. Let
\begin{align}\label{eq:functionaldef}
f(\boldsymbol{T}_{j},r_{\mathbf{X}},\nabla r_{\mathbf{X}}) &:= p(\boldsymbol{T}_j)(\mathbb{E}_{j\in\mathcal{V}}[\| r_{\mathbf{X}}(\boldsymbol{T}_{j}) - \boldsymbol{T}_{j} \|^2] \notag\\
&\quad\quad + \sigma_{\epsilon}^2 \mathbb{E}_{j\in\mathcal{V}}[\| \nabla r_{\mathbf{X}}(\boldsymbol{T}_{j})\|^2_F])
\end{align}
Then the bandwidth prediction in \cref{eq:rdae} can be transformed into finding the extremum of an integral functional $\mathcal{L}(r_{\mathbf{X}})$:
\begin{equation}
r^* = \arg \min \mathcal{L}(r_{\mathbf{X}}) , \\
s.t. \ \mathcal{L}(r_{\mathbf{X}}) = \int_{\mathbb{R}^n}{f(\boldsymbol{T}_{j},r_{\mathbf{X}},\nabla r_{\mathbf{X}})\text{d}\boldsymbol{T}_j}
\end{equation}
Despite a multivariate functional, it can be split into individual components:
\begin{equation}\small
\mathcal{L}(r_{\mathbf{X}}) = \sum_{i=1}^{n}\int_{\mathbb{R}^n}{p(\boldsymbol{T}_j)\left(( r_{\mathbf{X},i}(\boldsymbol{T}_{j}) - T_{ij})^2 + \sigma_{\epsilon}^2 \sum_{k=1}^{n}{\left(\displaystyle\frac{\partial r_{\mathbf{X},i}(\boldsymbol{T}_{j})}{\partial T_{kj}} \right)^2}\right)\text{d}\boldsymbol{T}_j} \\
\end{equation}
We know by the Euler-Langrage equation that the optimal $r^*$ satisfies
\begin{equation}\label{eq:euler-langrage}
\frac{\partial f}{\partial r_{\mathbf{X}}} \bigg|_{r^*} - \frac{\text{d}}{\text{d}\boldsymbol{T}_j}\frac{\partial f}{\partial \nabla r_{\mathbf{X}}} \bigg|_{r^*} = 0
\end{equation}
By \cref{eq:functionaldef}, we have
\begin{equation}\label{eq:el-leftderivative}
\displaystyle\frac{\partial f}{\partial r_{\mathbf{X}}} = 2(r_{\mathbf{X},i}(\boldsymbol{T}_{j}) - T_{ij})p(\boldsymbol{T}_j),
\end{equation}
\usetagform{normalsize}
{\small
\begin{align}
\displaystyle\frac{\partial f}{\partial (\nabla r_{\mathbf{X}})_i} &= 2\sigma_{\epsilon}^2 p(\boldsymbol{T}_j) \left[\displaystyle\frac{\partial r_{\mathbf{X},k}(\boldsymbol{T}_{j})}{\partial T_{ij}} \right]_{k}^\top  \\\label{eq:el-rightderivative}
\Rightarrow \displaystyle\frac{\partial}{\partial T_{ij}}\displaystyle\frac{\partial f}{\partial (\nabla r_{\mathbf{X}})_i} &= 2\sigma_{\epsilon}^2 \left(\textstyle\frac{\partial p(\boldsymbol{T}_j)}{\partial T_{ij}} \left[\textstyle\frac{\partial r_{\mathbf{X},k}(\boldsymbol{T}_{j})}{\partial T_{ij}}\right]_{k}^\top
+ p(\boldsymbol{T}_j) \left[\textstyle\frac{\partial^2 r_{\mathbf{X},k}(\boldsymbol{T}_{j})}{\partial T_{ij}^2}\right]_{k}^\top \right)
\end{align}
}
Putting \cref{eq:el-leftderivative} and \cref{eq:el-rightderivative} into \cref{eq:euler-langrage} yields
{\small
\begin{align}\label{eq:tobesolved}
r_{\mathbf{X},k}(\boldsymbol{T}_{j}) - T_{kj} &= \frac{\sigma_{\epsilon}^2}{p(\boldsymbol{T}_{j})}\sum_{i=1}^{n}{\left(\displaystyle\frac{\partial p(\boldsymbol{T}_{j})}{\partial T_{ij}} \displaystyle\frac{\partial r_{\mathbf{X},k}(\boldsymbol{T}_{j})}{\partial T_{ij}} + p(\boldsymbol{T}_{j})\displaystyle\frac{\partial^2 r_{\mathbf{X},k}(\boldsymbol{T}_{j})}{\partial T_{ij}^2} \right)} \notag\\
&= \sigma_{\epsilon}^2 \sum_{i=1}^{n}{\left(\displaystyle\frac{\partial \log{p(\boldsymbol{T}_{j})}}{\partial T_{ij}} \displaystyle\frac{\partial r_{\mathbf{X},k}(\boldsymbol{T}_{j})}{\partial T_{ij}} + \displaystyle\frac{\partial^2 r_{\mathbf{X},k}(\boldsymbol{T}_{j})}{\partial T_{ij}^2} \right)}
\end{align}
}%
\cite{RAETheory1} gives an analytical solution of \cref{eq:tobesolved} when $\sigma_{\epsilon}^2 \rightarrow 0$:
\begin{equation}
r^*_{\mathbf{X},k}(\boldsymbol{T}_{j})\bigg|_{\sigma_{\epsilon}^2 \rightarrow 0} = T_{kj} + \sigma_{\epsilon}^2\displaystyle\frac{\partial \log{p(\boldsymbol{T}_{j})}}{\partial T_{ij}}+o(\sigma_{\epsilon}^2)
\end{equation}
so the proof concludes:
\begin{equation}
r^*_{\mathbf{X}}(\boldsymbol{T}_{j}) - \boldsymbol{T}_j \propto \nabla \log{p(\boldsymbol{T}_{j})}
\end{equation}
This indicates that the perturbed topological encoding manifold $p(\tilde{\mathbf{T}})$ is approximately equal to the original manifold $p(\mathbf{T})$ when $\boldsymbol{\epsilon}$ is small enough. Hence, despite the changing bandwidths, the optimizing objective remains invariant as the topological manifold of the original graph data.
\end{proof}

\subsection{Mildness of Assumptions}\label{A.4}


\subsubsection{The noise mean $\mu_{\boldsymbol{\epsilon}}$.}
Proposition \ref{thm:dae} and Theorem \ref{thm:gradopt} hold under $n \ll 2|\mathcal{E}|$, that is, the mean of bandwidths over every edge needs to be small enough (in other words, the {\it mask ratio} needs to be close to 1). According to \cref{eq:noisemean}, \texttt{Bandana}'s mask ratio is fixed as $p=1-\mu_{\boldsymbol{\epsilon}}=1-n/2|\mathcal{E}_{train}|$, which is very large in large-scale networks (and even larger in practice because the graph data is not always connected), thus the assumption can be easily satisfied. For discrete \textsc{TopoRec}s, this also implies that little information is available during training. Yet, \texttt{Bandana} keeps the global topology intact, which is conducive to mitigating the impact of the extremely high mask ratio. The mask ratios of \texttt{Bandana} throughout our experiments are listed in Table \ref{table:maskratio}, where ``Calculated'' represents the mask ratios calculated by $p =1-n/2|\mathcal{E}_{train}|$, and ``Measured'' represents the actual mask ratios measured during training.

\begin{table}
\setlength{\tabcolsep}{3pt}
\caption{\small 
Mask ratios of \texttt{Bandana} on various datasets.
}
\label{table:maskratio}
\small
\begin{center}
\adjustbox{max width=0.48\textwidth}{
\begin{tabular}{lrr}
\toprule[1.2pt]
Dataset & \multicolumn{1}{c}{Calculated} & \multicolumn{1}{c}{Measured}\\
\midrule
Cora & 0.6983 & 0.7077\\
CiteSeer & 0.5702 & 0.6048\\
PubMed & 0.7383 & 0.7571\\
Photo & 0.9622 & 0.9630\\
Computers & 0.9671 & 0.9679\\
CS & 0.8683 & 0.8697\\
Physics & 0.9182 & 0.9185\\
ogbn-arxiv & 0.9140 & 0.9158\\
ogbl-collab & 0.8840 & 0.8995\\
\bottomrule[1.2pt]
\end{tabular}}
\end{center}
\end{table}

\subsubsection{The covariance \smash{$\mathbf{\Sigma}_{\tilde{\boldsymbol{T}}_{j}}$}.}
Another prerequisite of Proposition \ref{thm:dae} and Theorem \ref{thm:gradopt} is that the covariance of \smash{$\{\tilde{\boldsymbol{T}}_{j}\}_{j=1}^{n}$} should satisfy \smash{$\mathbf{\Sigma}_{\tilde{\boldsymbol{T}}_{j}}=c\mathbf{I}$} with an arbitrary constant $c$. It is obvious that $c$ is the variance of noise $\sigma_\epsilon$, so we mainly focus on the diagonal covariance matrix, which implies the independence between every two different entries of $\tilde{\boldsymbol{T}}_{j}$. As
\usetagform{normalsize}
{\small
\begin{align}
\mathbb{E}[\tilde{T}_{ij}\tilde{T}_{kj}] &= \mathbb{E}[(T_{ij} + \epsilon_{ij})(T_{kj} + \epsilon_{kj})] \notag\\
&= \mathbb{E}[T_{ij}T_{kj} + T_{ij}\epsilon_{kj} + T_{kj}\epsilon_{ij} + \epsilon_{ij}\epsilon_{kj}] \notag\\
&= \mathbb{E}[T_{ij}T_{kj}] + \mathbb{E}[T_{ij}]\mathbb{E}[\epsilon_{kj}] + \mathbb{E}[T_{kj}]\mathbb{E}[\epsilon_{ij}] + \mathbb{E}[\epsilon_{ij}]\mathbb{E}[\epsilon_{kj}] \notag\\
&= \mathbb{E}[T_{ij}T_{kj}] + \mathbb{E}[T_{ij} + \epsilon_{ij}]\mathbb{E}[T_{kj} + \epsilon_{kj}] - \mathbb{E}[T_{ij}]\mathbb{E}[T_{kj}] \notag\\
&= \mathbb{E}[\tilde{T}_{ij}]\mathbb{E}[\tilde{T}_{kj}] + \mathbb{E}[T_{ij}T_{kj}] - \mathbb{E}[T_{ij}]\mathbb{E}[T_{kj}]
\end{align}
}
for any $i, k \in \mathcal{N}_j, i \ne k$, it is equivalent to the independence of the local topology \smash{$\{T_{ij}\}_{i \in \mathcal{N}_j}$} of every node $j$, i.e. every two incoming edges of $j$ should be independent. 
While node relationships in real-world networks are more likely to be correlated, this assumption is introduced for the brevity of the mathematical derivation of Proposition \ref{thm:dae} and Theorem \ref{thm:gradopt}. Whether they still hold without this assumption necessitates further mathematical analysis.

\begin{figure*}
    \centering
    \includegraphics[width=1.76\columnwidth]{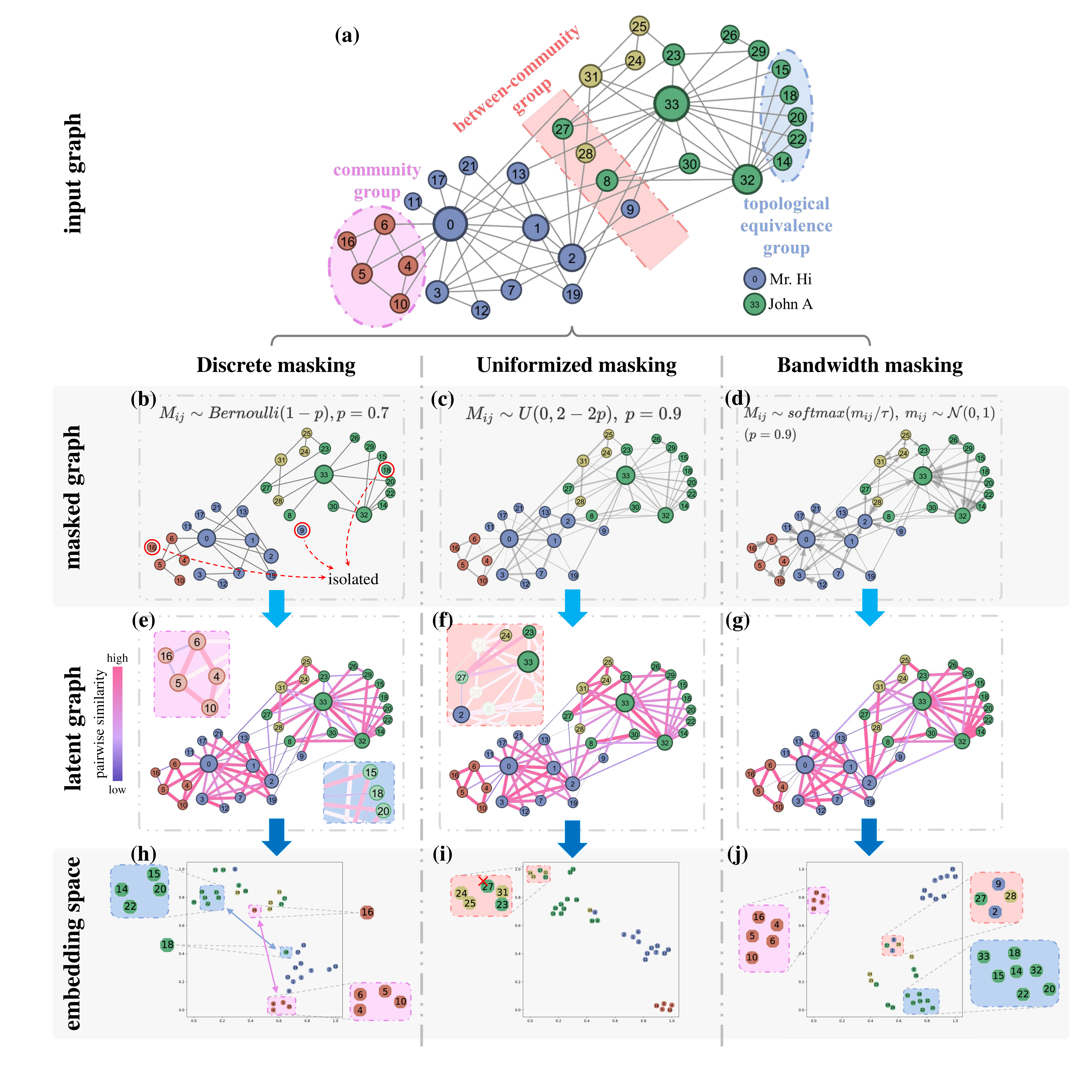}
    \caption{
    Node embedding visualization of different masking strategies. 
    \textmd{{\bf (a)} Karate Club, where each node is labeled by 1 of 4 classes (denoted by node colors). {\bf (b-d)} Edges are masked by three types of masking strategies. {\bf (e-g)} Latent graphs visualized by pairwise similarities of node embeddings. {\bf (h-j)} The 2-dimensional embedding space constructed by t-SNE~\protect\cite{t-SNE}. Colors of different insets indicate three different node groups: community group (\textcolor{magenta}{magenta}), between-community group (\textcolor{red}{red}), and topological equivalence group (\textcolor{blue}{blue}). }
    }
    \label{fig:karateclub}
\end{figure*}

\section{More Visualizations}\label{B}

In this section, we illustrate our embedding visualization study in Figure~\ref{fig:karateclub} to support the global and local informativeness of \texttt{Bandana} in Subsection \ref{4.2.1}.
We conduct the experiment on Karate Club~\cite{KarateClub}, a widely used real-world network dataset containing 34 nodes (numbered 0--33 in in Figure~\ref{fig:karateclub}(a)) and 156 undirected edges. Each node is assigned by a column of the identity matrix $\mathbf{I}_n$ as features. 
\texttt{Bandana} is trained on Karate Club with three masking setups: discrete masking (left column), uniformized masking (center column), and bandwidth masking (right column). Configurations of these setups are the same as those used in our ablation studies (Subsection~\ref{5.5}). Afterwards, we visualize the node embeddings in a 2-dimensional embedding space by t-SNE~\cite{t-SNE}, as in Figure~\ref{fig:karateclub}(h-j).
We mainly focus on three types of node groups, as shown in Figure~\ref{fig:karateclub}(a):
\begin{itemize}[leftmargin=*]
    \item \textcolor{magenta}{{\it Community group}}, a conspicuous cluster adjacent to node 0. Members include 4, 5, 6, 10, and 16.
    \item \textcolor{red}{{\it Between-community group}}, consisting of nodes located between both two clubs founded by the hub node 0 (Mr. Hi) and 33 (John A). They form two main clusters by influencing their neighboring nodes. However, between-community group members like 8, 9, 27, and 28 are similarly affected by both clubs.
    \item \textcolor{blue}{{\it Topological equivalence group}}, consisting of nodes that share their neighborhood: every member (14, 15, 18, 20, 22) is connected and only connected to node 32 and 33.
\end{itemize}

\emph{Result analysis.} According to the left column (Figure~\ref{fig:karateclub}(b), (e), (h)), discrete masking firstly undermines the graph topology by creating isolated nodes. For example, node 16 and 18 behave abnormally in the latent graph: as the message flows from 16 are blocked, it is detached from the community group, and finally drifts away from the community group cluster in the embedding space. On the other hand, the topological equivalence of 18 is not preserved due to the masking, causing it to be far away from other topological equivalence group members in the embedding space. These observations indicate that {\it discrete edge masking obstructs the message flows that may be crucial to the sink nodes}.

According to the center column (Figure~\ref{fig:karateclub}(c), (f), (i)), uniformized masking has partially resolved the isolation issues above, with 16 and 18 joined clusters they belong to. This verifies the {\it global informativeness} of non-discrete masking. However, masks sampled from a uniform distribution do not provide local informativeness. With no effort in distinguishing neighbors, node 2 is overlooked by its neighboring node 27 and is pulled towards the club of node 33, resulting in the deviation from the between-community group.

Finally, the right column (Figure~\ref{fig:karateclub}(d), (g), (j)) tells that bandwidth masking preserves the node relations of all three groups, so we have verified both the {\it global informativeness} and the {\it local informativeness} of bandwidth masking.

\section{More Discussions}\label{C}


This section discusses connections between \texttt{Bandana} and other deep learning models, including graph attention models, score-based models, and energy-based models. Our discussions shed light on the reliability of \texttt{Bandana} from different perspectives, and we hope they will lead to deeper insights in the future.

\subsection{Graph Attention Models}\label{C.1}
The way we use bandwidth to do weighted message propagation is inspired by the graph self-attention mechanism~\cite{GAT, SuperGAT}.
Existing studies have pointed out that attention weights should be able to distinguish different edges~\cite{GATRetrospective}, and softmax-based attention can amplify the dispersion of attention weights to be more discriminative~\cite{cosFormer}. Therefore, we hold that it is beneficial to generate bandwidth values from a softmax-amplified distribution, i.e. the Boltzmann-Gibbs distribution. Yet, our bandwidth masking and graph attention mechanisms are {\it fundamentally different}. Graph attention models empirically {\it fit} a locally optimal weight distribution of neighborhood, in which the parameter matrices converge as training goes on. Our bandwidth masking strategy does not learn the weights, but {\it randomly generates them parameter-free}. For every iteration, each edge is randomly assigned a different bandwidth, so that different neighbors will be noticed every time to help the encoder distinguish their messages. Plus, the layer-wise masking is inspired by \texttt{SuperGAT}~\cite{SuperGAT} which, however, does not give any explanations, discussions, or even empirical results in terms of the layer-wise approach. Our work also bridges this gap.

\texttt{Bandana} is currently not able to directly pre-train GAT and Graph Transformers as it also assigns weights to every edge in message propagation. It needs to be adjusted to accommodate bandwidths and attention weights. How to improve the topological learning performance of graph attention-based networks in self-supervision now remains an interesting future work.

\subsection{Score-based \& Energy-based Models}\label{C.2}
By Theorem \ref{thm:gradopt}, bandwidth prediction is equivalent to optimizing the gradient of a log probability $\nabla\log{p(\boldsymbol{T}_{j})}$. This is also called a ``{\it score}'' in the field of generation, which can be directly estimated by Score Matching~\cite{ScoreMatching} 
to generate samples that match the original data distribution. Therefore, \texttt{Bandana} can be seen as an implicit score-based model that adds Gaussian noise to the graph topology and learns its score.

Energy-based models (EBMs)~\cite{EBM} perform an alternate optimization process: (\lowercase{\romannumeral 1}) {\it optimizing the output}, and (ii) {\it optimizing the energy}. (i) The forward pass (or inference) of the model $f(x; \Theta):x \mapsto y$ is viewed as finding the local minimum point $y^*$ on a manifold $E_\Theta \in \mathcal{F}$. Here $E_\Theta:\mathcal{X}\times\mathcal{Y}\rightarrow\mathbb{R}$ is a scoring function of the input-output pair $(x,y)$, judging whether the output $y$ matches $x$ the best. $\mathcal{F}$ is the function space of $E_\Theta$. $E_\Theta(x,y)$ is smaller if $y$ better matches $x$. As the learning process goes on, $y$ has an increasing tendency for minimizing $E_\Theta(x,y)$, and $y=y^*$ when the model converges. Analogous to the principle of minimum energy in thermodynamics, $E_\Theta$ is called an {\it energy function}. (\lowercase{\romannumeral 2}) The backward pass of $f(x, \Theta)$ searchs on $\mathcal{F}$ for the optimal $E_\Theta$ that meets the above conditions. As the learning process goes on, $E_\Theta(x,y)$ has an increasing tendency to assign lower energy values to more compatible $(x,y)$ pairs and vice versa.

Probabilistic discriminative models $f(\boldsymbol{x}; \Theta):\boldsymbol{x} \mapsto \hat{\boldsymbol{y}}$ based on maximum likelihood estimation can directly define the energy as its negative output logit (i.e. the unnormalized probability) because it indicates which $\hat{\boldsymbol{y}}$ matches $x$ the best. However, as a non-probabilistic model, an autoencoder cannot define the energy in this way. This issue has been solved by \cite{AEScore2, AEScore3} who state that the reconstruction of denoising autoencoders is equivalent to performing regularized score matching, and the energy function can be derived using the antiderivative of the score (for any output $\hat{y}$):
\begin{equation}\label{eq:aeenergy}
E_\Theta(\boldsymbol{X}) = \log{p(\boldsymbol{X})} = \int (r(\boldsymbol{X};\Theta) - \boldsymbol{X})\text{d}\boldsymbol{X}
\end{equation}
By Proposition \ref{thm:dae}, discrete \textsc{TopoRec}s are not denoising autoencoders and hence not EBMs in this way. On the contrary, \texttt{Bandana} can be viewed as an EBM. It can be inferred from Theorem \ref{thm:gradopt} that the manifold of $r_{\mathbf{X}}(\mathbf{T}) - \mathbf{T}$ is a gradient vector field $\mathcal{T}$, on which inference is allowed to perform gradient descent on $\mathcal{T}$ to find the output $r_{\mathbf{X}}(\mathbf{T})$ that is closest to the input $\mathbf{T}$. Hence, the following corollary holds:

\begin{tcolorbox}[breakable, pad at break=2.5mm, before skip=1mm, after skip=1mm, width=1.0\linewidth, boxsep=0mm, arc=1.5mm, left=1.5mm, right=1.5mm, top=1.5mm, bottom=1mm]
\begin{corollary}[\texttt{Bandana} is energy-based]
Let $\mathcal{T} = E_{\mathbf{X},\Theta}(\mathbf{T})$ be an energy landscape similarly defined by \cref{eq:aeenergy}. Then \texttt{Bandana}'s forward and backward passes are equivalent to implicitly performing the following optimization tasks on $\mathcal{T}$:
\begin{align}
\text{Forward pass: } r_{\mathbf{X}}(\tilde{\mathbf{T}}) &= \arg \min_{\tilde{\mathbf{T}}} E_{\mathbf{X},\Theta}(\tilde{\mathbf{T}}) \notag\\
\text{Backward pass: } E^*_{\mathbf{X},\Theta} &= \arg \min_{E \in \mathcal{F}} \mathcal{L} \notag
\end{align}
\end{corollary}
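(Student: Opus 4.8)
The plan is to leverage the two results already in hand --- Proposition~\ref{thm:dae}, that \texttt{Bandana} is a regularized denoising autoencoder, and Theorem~\ref{thm:gradopt}, that the optimal reconstructor's residual equals the score --- together with the energy definition in~\cref{eq:aeenergy}, and to verify the two asserted optimization characterizations separately. The geometric backbone comes first: by Theorem~\ref{thm:gradopt}, the residual field $v(\mathbf{T}) := r_{\mathbf{X}}(\mathbf{T}) - \mathbf{T}$ is proportional to $\nabla\log p(\mathbf{T})$, and since \cref{eq:aeenergy} defines $E_{\mathbf{X},\Theta}(\mathbf{T}) = \log p(\mathbf{T})$ as the antiderivative of this residual, we obtain $\nabla E_{\mathbf{X},\Theta}(\mathbf{T}) = v(\mathbf{T})$ up to the proportionality constant. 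Thus $v$ is a conservative field with scalar potential $E_{\mathbf{X},\Theta}$ --- the observation that turns the reconstructor into an energy model, since the displacement produced by a single application of $r_{\mathbf{X}}$ points along $\nabla E_{\mathbf{X},\Theta}$.

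For the forward pass I would argue that denoising is implicit gradient flow on $\mathcal{T}$. Because $r_{\mathbf{X}}(\mathbf{T}) = \mathbf{T} + \nabla E_{\mathbf{X},\Theta}(\mathbf{T})$, the map $r_{\mathbf{X}}$ is exactly one step of gradient ascent on the potential, so its fixed points are the zeros of the residual, i.e. the critical set $\{\mathbf{T} : \nabla E_{\mathbf{X},\Theta}(\mathbf{T}) = 0\}$. The stable fixed points --- the attractors toward which a noisy input $\tilde{\mathbf{T}}$ is driven --- are precisely the extrema of the energy landscape, which yields $r_{\mathbf{X}}(\tilde{\mathbf{T}}) = \arg\min_{\tilde{\mathbf{T}}} E_{\mathbf{X},\Theta}(\tilde{\mathbf{T}})$ in the asymptotic sense. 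I would explicitly read ``forward pass'' as this implicit gradient descent rather than a literal single layer, matching the EBM inference picture described above.

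For the backward pass the argument runs through Proposition~\ref{thm:dae}: training minimizes the regularized denoising loss $\mathcal{L}$ of \cref{eq:rdae}. By the score-matching equivalence for denoising autoencoders invoked in~\cite{AEScore2, AEScore3}, minimizing $\mathcal{L}$ shapes the residual field $v$, and hence its potential $E_{\mathbf{X},\Theta}$, to best fit the data distribution. Since the antiderivative relation of \cref{eq:aeenergy} is a bijection between admissible reconstructors $r_{\mathbf{X}}$ and energy functions $E \in \mathcal{F}$, the minimization $\arg\min_{r_{\mathbf{X}}}\mathcal{L}$ transports to $E^*_{\mathbf{X},\Theta} = \arg\min_{E\in\mathcal{F}}\mathcal{L}$.

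The main obstacle I anticipate is the forward-pass claim, where two points need care. First, the residual field must be genuinely integrable (curl-free) for the potential $E_{\mathbf{X},\Theta}$ to be path-independent and well defined by \cref{eq:aeenergy}; this holds exactly only for the optimal $r^*_{\mathbf{X}}$ of Theorem~\ref{thm:gradopt}, so I would state the corollary as an asymptotic ($\sigma_\epsilon^2 \to 0$) equivalence. Second, iterating the one-step gradient map must actually converge to a minimizer rather than merely a critical point; I would defer this to the standard convergence theory of gradient flows under the smoothness already assumed on $r_{\mathbf{X}}$ in Proposition~\ref{thm:dae}.
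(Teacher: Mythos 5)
Your proposal is correct and takes essentially the same route as the paper: the paper gives no formal proof of this corollary, only the discussion immediately preceding it, which---exactly as you do---combines Proposition~\ref{thm:dae} (to rule discrete \textsc{TopoRec}s out and \texttt{Bandana} in as a denoising autoencoder), Theorem~\ref{thm:gradopt} (to identify the residual $r_{\mathbf{X}}(\mathbf{T})-\mathbf{T}$ with a gradient field), and the antiderivative energy definition of \cref{eq:aeenergy}, then reads the forward pass as implicit gradient descent on that landscape and the backward pass as a search over $\mathcal{F}$. Your additional caveats (integrability of the residual field only at the optimum, the asymptotic $\sigma_\epsilon^2\to 0$ reading, and convergence of the iterated one-step map) are refinements of points the paper glosses over, not deviations from its argument.
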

\end{tcolorbox}

Such correlation between \texttt{Bandana} and EBM provides another perspective of the reliability and flexibility of bandwidth mechanisms. We see this as a good foundation for future insights.

\section{More Configurations}\label{D}

This section details our experimental configurations for reproducibility.

\subsection{Data Statistics}\label{D.1}

We use a total of 12 real-world datasets:
\begin{itemize}[leftmargin=*]
    \item {\it Citation networks}: Cora, CiteSeer, PubMed~\cite{Planetoid};
    \item {\it Co-purchase networks}: (Amazon-)Photo, (Amazon-)Computers~\cite{Amazon/Coauthor};
    \item {\it Co-author networks}: CS, Physics~\cite{Amazon/Coauthor};
    \item {\it OGB networks for node classification}: ogbn-arxiv, ogbn-mag~\cite{OGB};
    \item {\it OGB networks for link prediction}: ogbl-collab, ogbl-ppa~\cite{OGB};
    \item {\it Hyperlink networks} (directed): Wiki-CS~\cite{Wiki-CS}.
\end{itemize}
Please refer to the corresponding papers for a detailed introduction. Statistics of all datasets are listed in Table \ref{table:data}, where ``density'' stands for the percentage of all potential connections in a network that are actually positive edges, formally $\rho=\frac{|\mathcal{E}|}{n(n-1)}$.

\begin{table}
\setlength{\tabcolsep}{3pt}
\caption{\small 
Dataset statistics. \textmd{``$\dagger$'' marks the synthetic ones.}
}
\label{table:data}
\small
\begin{center}
\adjustbox{max width=0.5\textwidth}{
\begin{tabular}{lrrrrr}
\toprule[1.2pt]
Dataset & \multicolumn{1}{c}{\#nodes} & \multicolumn{1}{c}{\#edges} & \multicolumn{1}{c}{\#features} & \multicolumn{1}{c}{\#classes} & \multicolumn{1}{c}{Density (‰)} \\
\midrule
Swiss Roll$^\dagger$ & 500 & 6,712 & -- & -- & 26.9\\
Two-moon$^\dagger$ & 2,000 & 12,264 & -- & -- & 3.07\\
Cora & 2,708 & 10,556& 1,433 & 7 &1.44\\
CiteSeer&3,327&9,104&3,703&6&0.82 \\
PubMed&19,717&88,648&500&3&0.23\\
Photo&7,487&119,043&745&8&4.07\\
Computers&13,381&245,778&767&10&2.60\\
CS&18,333&81,894&6,805&15&0.24\\
Physics&34,493&247,962&8,415&5&0.21\\
Wiki-CS&11,701&216,123&300&10&1.58\\
ogbn-arxiv&169,343&2,315,598&128&40&0.08\\
ogbn-mag&736,389&10,792,672&128&349&0.02\\
ogbl-collab&235,868&2,570,930&128& -- &0.05\\
ogbl-ppa&576,289&30,326,273&58& -- &0.09\\
\bottomrule[1.2pt]
\end{tabular}}
\end{center}
\end{table}

\subsection{Hardware \& Environments}\label{D.2}


\texttt{Bandana} is built upon PyTorch~\cite{PyTorch} 1.12.1 and PyTorch Geometric (PyG)~\cite{PyG} 2.3.1. The latter provides all 7 datasets used throughout the quantitative experiments except ogbn-arxiv and ogbl-collab, which are from the OGB 1.3.5 package~\cite{OGB}. Two synthetic datasets used in Section \ref{5.2} come from the PyGSP package~\cite{PyGSP}. All experiments are conducted on a 24GB NVIDIA GeForce GTX 3090 GPU with CUDA 11.3.

\begin{table*}
\caption{\small 
Detailed hyperparameters of \texttt{Bandana}.
}
\label{table:hyperparameter}
\small
\begin{center}
\begin{tabular}{lcccccccc}
\toprule[1.2pt]
Dataset & Cora & CiteSeer & PubMed & Photo  & Computers & CS   & Physics & ogbn-arxiv\\
\midrule
No. of layers  & 3 & 5 & 2 & 2 & 3 & 2 & 2 & 4 \\
Learning rate $\gamma$ & 1e-2 & 2e-2     & 1e-3   & 2e-3 & 1e-3  & 1e-2 & 2e-3  & 5e-4\\
Bandwidth temperature $\tau$ & 0.9  & 0.2    & 0.2    & 1  & 0.4       & 1e-6 & 0.4     & 0.4 \\
Intermediate feature dim.         & 256  & 256      & 64     & 256 & 256       & 64   & 256     & 256\\
Output feature dim. & 256  & 256      & 32     & 64 & 64        & 32   & 128     & 256\\
Encoder dropout & 0.8  & 0.8      & 0.6    & 0.8 & 0.5       & 0.8  & 0.8     & 0.2\\
Decoder dropout & 0  & 0        & 0.7    & 0.2   & 0.2       & 0.2  & 0.2     & 0\\
Weight decay (for encoder)        & 5e-5 & 5e-5     & 5e-5   & 5e-5 & 0         & 5e-5 & 5e-5    & 5e-5       \\
Weight decay (for linear probing) & 5e-3 & 1e-1     & 5e-5   & 5e-4 & 5e-4      & 1e-3 & 1e-3    & 1e-4 \\
\bottomrule[1.2pt]
\end{tabular}
\end{center}
\end{table*}

\subsection{Model Setup \& Hyperparameters}\label{D.3}

\emph{Training setup.}
We follow the train/validation/test split of previous work~\cite{MaskGAE}. To be specific, we use all existing official splits. For all datasets, edge sets are divided into $\mathcal{E}_\text{train}\!:\!\mathcal{E}_\text{val}\!:\!\mathcal{E}_\text{test} = 85\%\!:\!5\%\!:\!10\%$ for training and the downstream link prediction. As for node classification, the official split of Planetoid and ogbn-arxiv is adopted and node sets of other datasets are divided into $\mathcal{V}_\text{train}\!:\!\mathcal{V}_\text{val}\!:\!\mathcal{V}_\text{test} = 10\%\!:\!10\%\!:\!80\% $.

\texttt{Bandana} employs a GCN encoder ($\tilde{\mathbf{G}}=\boldsymbol{\Sigma}_e\hat{\tilde{\mathbf{D}}}^{-1/2}\hat{\tilde{\mathbf{A}}}\hat{\tilde{\mathbf{D}}}^{-1/2}$ in \cref{eq:bandanaprop}) with 1 to 5 layers and a fixed 2-layer MLP decoder with dropout. For brevity, \texttt{Bandana} does not resort to extra techniques such as path masking, degree regression~\cite{MaskGAE}, cross-correlation decoding~\cite{S2GAE}, re-masking, and random feature substitution~\cite{GraphMAE}. All non-linear layers (of the encoder, decoder, and learnable downstream branches) are Xavier-initialized with biases 0. Every layer of the encoder is equipped with batch normalization~\cite{BN}, dropout, and an ELU activation function~\cite{ELU}.
We perform grid search for the learning rate $\gamma$ and temperature $\tau$ over the searching space \{5e-4, 1e-3, 2e-3, 5e-3, 1e-2, 2e-2\} and \{1e-6, 0.1, 0.2, 0.3, ..., 1\} respectively. Adam~\cite{Adam} is used as the model optimizer. For all datasets except ogbn-arxiv, we use a fixed training strategy of 1000 epochs with early stopping, the patience of which is set to 30. For ogbn-arxiv, 100 epochs with batch size $2^{16}$. As with previous work, both grid search and early stopping are carried out on the validation set, and the best validation models are saved for testing.

\emph{Linear probing for node classification.} The so-called linear probing~\cite{MAE} first performs unsupervised pre-training on both the encoder and decoder. Then, the decoder is substituted with a Xavier-initialized linear layer. It is trained (with the encoder frozen) for another 100 epochs with a fixed learning rate of 1e-2 to obtain the classification logits. 

All model hyperparameters for node classification are given in Table \ref{table:hyperparameter}. Please refer to the configuration in our source code for link prediction. They are selected manually on the validation set among several candidate values, except the grid-searched parameters.

\section{More Experiment Analyses}\label{E}

This section showcases the rest of our experimental results to answer some additional research questions:
\begin{itemize}[leftmargin=*]
    \item ARQ1. {\it What is the time and space consumption of \texttt{Bandana}?}
    \item ARQ2. {\it Why is the dot-product probing setup fairer for evaluating self-supervised models?}
    \item ARQ3. {\it What is the performance of \texttt{Bandana} on more larger-scale datasets except ogbn-arxiv?}
    \item ARQ4. {\it Is \texttt{Bandana} able to be generalized to different graph types, e.g. directed graphs?}
    \item ARQ5. {\it How do certain parameters affect \texttt{Bandana}'s performance?}
    \item ARQ6. {\it How does \texttt{Bandana} perform with limited training data?}
\end{itemize}

\subsection{Time and Space Consumptions (ARQ1)}\label{E.1}

In this subsection, we provide the time and space complexity of \texttt{Bandana} as well as the representative baselines.

\emph{Complexity analysis.}
The time and space complexity of each component of \texttt{Bandana} is listed in Table \ref{table:complexity}. 
The total time and space complexity tells that {\it \texttt{Bandana} scales linearly w.r.t. the number of nodes and edges}. Despite that \texttt{Bandana} requires extra time for continuous masking and layer-wise decoding, the overall time complexity of \texttt{Bandana} is at the same level as \texttt{MaskGAE}. Beyond that, the lightweight decoder architecture reduces the difference in time consumption to a relatively small extent in practice.

\emph{Quantitative comparison.} We have further measured the wall-clock pre-training time and peak GPU memory consumption for \texttt{Bandana} as well as the baselines above. All self-supervised models are used to pre-train a 2-layer GCN for fixed 500 epochs (100 for ogbn-arxiv). 
As shown in Table \ref{table:timespace}, \texttt{Bandana} is not as efficient as \texttt{MaskGAE} because it utilizes the whole training graph for message propagation and the layer-wise prediction multiplies the decoding time complexity by $K$. However, the lightweight decoder architecture makes the overall time gap much smaller than $K$ times (1.7$\times$ at most), making the time-effectiveness tradeoff more justifiable. In terms of GPU memory consumption, \texttt{Bandana} is overall on par with \texttt{MaskGAE}. The reason is that 
discrete edge masking actually does not offer remarkable memory savings, as $\mathcal{O}(\vert\mathcal{E}\vert)$ is nearly negligible in the total memory cost. Therefore, \texttt{Bandana} still maintains the lightweight advantage of masked graph autoencoders.

\begin{table}
\caption{\small 
Time \& space complexity of \texttt{Bandana}'s components. }
\label{table:complexity}
\small
\begin{center}
\adjustbox{max width=\columnwidth}{
\begin{tabular}{lll}
\toprule[1.2pt]
Component & Time & Space \\
\midrule
Bandwidth masking & $\mathcal{O}(\vert\mathcal{E}\vert)$ & $\mathcal{O}(\vert\mathcal{E}\vert)$ \\
Encoding & $\mathcal{O}(Kd\vert\mathcal{E}\vert+Kd^2\vert\mathcal{V}\vert)$ & $\mathcal{O}(Kd^2+\vert\mathcal{E}\vert+Kd\vert\mathcal{V}\vert)$ \\
Decoding & $\mathcal{O}(Kd^2\vert\mathcal{E}\vert)$ & $\mathcal{O}(d\vert\mathcal{E}\vert)$ \\
Bandwidth prediction & $\mathcal{O}(K\vert\mathcal{E}\vert)$ & $\mathcal{O}(\vert\mathcal{E}\vert)$ \\
Total & $\mathcal{O}(Kd^2\vert\mathcal{E}\vert+Kd^2\vert\mathcal{V}\vert)$ & $\mathcal{O}(Kd^2+d\vert\mathcal{E}\vert+Kd\vert\mathcal{V}\vert)$ \\
\bottomrule[1.2pt]
\end{tabular}}
\end{center}
\end{table}

\begin{table}
\caption{\small 
Comparison of pre-training time (seconds) and memory consumption (MB). \textmd{``OOM'' stands for ``Out-Of-Memory'' on a 24GB GPU. }
}
\label{table:timespace}
\small
\begin{center}
\adjustbox{max width=\columnwidth}{
\begin{tabular}{lrrrrr}
\toprule[1.2pt]
Dataset & \multicolumn{1}{c}{\texttt{VGAE}} & \multicolumn{1}{c}{\texttt{GCA}} & \multicolumn{1}{c}{\texttt{SeeGera}} & \multicolumn{1}{c}{\texttt{MaskGAE}} & \multicolumn{1}{c}{\texttt{Bandana}} \\
\midrule
\multicolumn{6}{c}{Pre-training time (s)} \\
\midrule
Cora & 1.96e01 & 4.19e01 & 1.25e02 & 1.16e01 & 1.95e01 \\
CiteSeer & 2.44e01 & 4.53e01 & 4.59e02 & 1.40e01 & 2.08e01 \\
PubMed & 4.06e01 & 2.29e02 & 9.01e02 & 5.68e01 & 7.62e01 \\
ogbn-arxiv & 4.79e02 & OOM & OOM & 2.47e02 & 4.07e02 \\
ogbl-collab & 1.25e03 & OOM & OOM & 6.39e02 & 9.47e02 \\
\midrule
\multicolumn{6}{c}{Peak GPU memory (MB)} \\
\midrule
Cora & 1,169 & 1,431 & 2,045 & 1,171 & 1,459 \\
CiteSeer & 1,235 & 1,687 & 2,959 & 1,243 & 1,677 \\
PubMed  & 1,409 & 12,267 & 20,781 & 1,345 & 1,523 \\
ogbn-arxiv & 8,941 & >24,576 & >24,576 & 6,149 & 4,925 \\
ogbl-collab & 7,289 & >24,576 & >24,576 & 5,283 & 6,269 \\
\bottomrule[1.2pt]
\end{tabular}}
\end{center}
\end{table}

\begin{table*}
\caption{\small 
Average AUC (\%) of link prediction under the end-to-end training/fine-tuning (ETE/FT) and the dot-product probing (DPP).
}
\label{table:dpp}
\small
\begin{center}
\adjustbox{max width=\textwidth}{
\begin{tabular}{lclllllll}
\toprule[1.2pt]
Model & Setup & \multicolumn{1}{c}{Cora} & \multicolumn{1}{c}{CiteSeer} & \multicolumn{1}{c}{PubMed} & \multicolumn{1}{c}{Photo} & \multicolumn{1}{c}{Computers} & \multicolumn{1}{c}{CS} & \multicolumn{1}{c}{Physics} \\[-0.1em]
\midrule
\multirow{2}{*}{\texttt{T-BGRL}~\cite{T-BGRL}} & FT & 91.34 & 95.70 & 95.70 & 98.22 & 97.76 & 95.91 & 96.42\\ [-0.2em]
& DPP & 73.18 (\textcolor{red!60!black}{\bf $\downarrow$18.2}) & 78.11 (\textcolor{red!60!black}{\bf $\downarrow$17.6}) & 76.21 (\textcolor{red!60!black}{\bf $\downarrow$19.5}) & 80.80 (\textcolor{red!60!black}{\bf $\downarrow$17.9}) & 84.60 (\textcolor{red!60!black}{\bf $\downarrow$13.2}) & 70.08 (\textcolor{red!60!black}{\bf $\downarrow$25.8}) & 89.18 (\textcolor{red}{$\downarrow$7.24}) \\ [-0.2em]
\arrayrulecolor[RGB]{180, 180, 180}\cmidrule(){2-9}\arrayrulecolor[RGB]{0, 0, 0} 
\multirow{2}{*}{\texttt{S2GAE}~\cite{S2GAE}} & ETE & 93.41 & 93.14 & 98.34 & 96.97 & 95.49 & 94.13 & 97.02\\ [-0.2em]
& DPP & 89.27 (\textcolor{red}{$\downarrow$4.14}) & 86.35 (\textcolor{red}{$\downarrow$6.79}) & 89.53 (\textcolor{red}{$\downarrow$8.81}) & 86.80 (\textcolor{red!60!black}{\bf $\downarrow$10.2}) & 84.16 (\textcolor{red!60!black}{\bf $\downarrow$11.3}) & 86.60 (\textcolor{red}{$\downarrow$7.53}) & 88.92 (\textcolor{red}{$\downarrow$8.10}) \\ [-0.2em]
\arrayrulecolor[RGB]{180, 180, 180}\cmidrule(){2-9}\arrayrulecolor[RGB]{0, 0, 0} 
\multirow{2}{*}{\texttt{MaskGAE}-edge~\cite{MaskGAE}} & ETE & 96.46 & 97.91 & 98.84 & 98.73 & 98.72 & 98.92 & 95.10\\ [-0.2em]
& DPP & 95.66 (\textcolor{red!60!white}{$\downarrow$0.80}) & 97.02 (\textcolor{red!60!white}{$\downarrow$0.89}) & 96.51 (\textcolor{red}{$\downarrow$2.33}) & 81.12 (\textcolor{red!60!black}{\bf $\downarrow$17.6}) & 76.23 (\textcolor{red!60!black}{\bf $\downarrow$22.5}) & 96.50 (\textcolor{red}{$\downarrow$2.42}) & 93.09 (\textcolor{red}{$\downarrow$2.01})\\ [-0.2em]
\arrayrulecolor[RGB]{180, 180, 180}\cmidrule(){2-9}\arrayrulecolor[RGB]{0, 0, 0} 
\multirow{2}{*}{\texttt{MaskGAE}-path~\cite{MaskGAE}} & ETE & 96.43 & 97.92 & 98.74 & 98.56 & 98.73 & 98.72 & 98.76\\ [-0.2em]
& DPP & 95.47 (\textcolor{red!60!white}{$\downarrow$0.96}) & 97.21 (\textcolor{red!60!white}{$\downarrow$0.71}) & 97.19 (\textcolor{red}{$\downarrow$1.55}) & 80.46 (\textcolor{red!60!black}{\bf $\downarrow$18.1}) & 73.24 (\textcolor{red!60!black}{\bf $\downarrow$25.5}) & 92.26 (\textcolor{red}{$\downarrow$6.46}) & 94.00 (\textcolor{red}{$\downarrow$4.76})\\ [-0.2em]
\arrayrulecolor[RGB]{180, 180, 180}\cmidrule(){2-9}\arrayrulecolor[RGB]{0, 0, 0} 
\rowcolor{gray!25}
& ETE & 95.84 & 97.49 & 97.32 & 97.61 & 96.38 & 98.50 & 98.53\\ [-0.2em]
\rowcolor{gray!25}
\multirow{-2}{*}{\texttt{Bandana}} & DPP & 95.71 (\textcolor{red!60!white}{$\downarrow$0.13}) & 96.89 
 (\textcolor{red}{$\downarrow$1.08}) & 97.26 (\textcolor{red!60!white}{$\downarrow$0.06}) & 97.24 (\textcolor{red!60!white}{$\downarrow$0.37}) & 97.33 (\textcolor{green!60!black}{$\uparrow$0.95}) & 97.42 (\textcolor{red}{$\downarrow$1.08}) & 97.02 (\textcolor{red}{$\downarrow$1.51})\\ [-0.2em]
\bottomrule[1.2pt]
\end{tabular}}
\end{center}
\end{table*}

\subsection{The Dot-Product Probing (ARQ2)}\label{E.2}

\begin{figure}
    \centering
    \includegraphics[width=0.8\columnwidth]{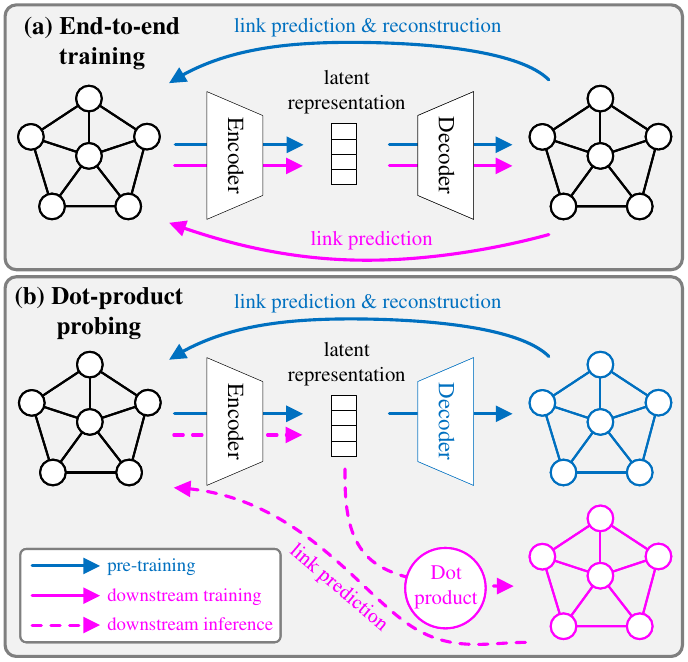}
    \caption{
    The workflow of end-to-end training (ETE) and dot-product probing (DPP).
    }
    \label{fig:dpp}
\end{figure}

In this subsection, we reiterate the necessity of dot-product probing in self-supervised link prediction. 
We first reveal three shortcomings of the traditional end-to-end evaluation scheme. Firstly, ETE has become more like a fully supervised case than self-supervised, which is not effective enough to evaluate the generalization ability of self-supervised methods. Secondly, ETE relies on the trained decoder parameters in downstream training, leading to the over-optimistic evaluation results and saturation of the link prediction accuracy on many datasets~\cite{HeaRT}. Lastly, it is unfair to methods that do not learn by link prediction, as downstream information is inappropriately provided in the pre-training phase. In contrast, the dot-product probing simply replaces the decoder with a dot-product decoder during the evaluation, illustrated in Figure \ref{fig:dpp}. DPP solely takes the latent representations provided by the encoder and directly gets the link prediction results, without any additional downstream training process, which decouples the pre-training and downstream adaptation and thus provides a fairer evaluation of self-supervised methods.

We consider several baselines specifically designed for link prediction, namely 
\texttt{T-BGRL}, \texttt{S2GAE}, and \texttt{MaskGAE}. Their link prediction performance is compared by two evaluation strategies: end-to-end training or fine-tuning (ETE/FT), and dot-product probing (DPP). For ETE/FT, \texttt{T-BGRL} fine-tunes with a 1-layer Hadamard-product MLP decoder, while the others are trained end-to-end with a 2-layer MLP decoder. Table \ref{table:dpp} shows that, under the DPP setting, the link prediction accuracies of all baselines on a vast majority of datasets show a noticeable decrease compared to those under the ETE/FT setup. As a contrastive method, \texttt{T-BGRL} is more dependent on the fine-tuning process for link prediction, so its performance is the most vulnerable out of all baselines. However, the same phenomenon is observed on the \textsc{TopoRec}s as well. The performance of \texttt{MaskGAE} even decreases by more than 17\% and 22\% on Photo and Computers, respectively. 
The clear sensitivity to the DPP setting suggests that decoders of the baseline models play a major role in their excellent performance. \texttt{Bandana}, however, is relatively immune to DPP, indicating that its effectiveness is provided by the encoder and thus more generalizable.

\setlength{\fboxsep}{0.5pt}
\setlength{\abovecaptionskip}{5pt}
\begin{table*}
\centering
\caption{\small 
Additional performance on OGB datasets. \textmd{Best results in each column are in {\bf bold}.}}
\label{table:extraogb}
\subfloat[
\textbf{Hits@k(\%) of link prediction on ogbl-collab.}
\label{table:collab}
]{
\centering
\begin{minipage}{0.3\linewidth}{\begin{center}
\adjustbox{max width=1\linewidth}{
\begin{tabular}{lcc}
\toprule 
Model & Hits@20 & Hits@50 \\
\midrule 
\texttt{MaskGAE}-edge & 58.59 ± 1.19 & 65.58 ± 0.43 \\
\texttt{MaskGAE}-path & 58.43 ± 1.06 & 65.58 ± 0.58 \\
\rowcolor{gray!25}
\texttt{Bandana} & {\bf 60.42 ± 0.84} & {\bf 67.77 ± 0.72} \\
\bottomrule 
\end{tabular}}
\end{center}}\end{minipage}
}
\hspace{1em}
\subfloat[
\textbf{Hits@k(\%) of link prediction on ogbl-ppa} (by running 3 times).
\label{table:ppa}
]{
\begin{minipage}{0.3\linewidth}{\begin{center}
\adjustbox{max width=1\linewidth}{
\begin{tabular}{lcc}
\toprule 
Model & Hits@50 & Hits@100 \\
\midrule 
\texttt{MaskGAE}-edge & 13.16 ± 0.27 & 26.78 ± 2.44 \\
\texttt{MaskGAE}-path & 10.72 ± 0.73 & 24.32 ± 1.78 \\
\rowcolor{gray!25}
\texttt{Bandana} & {\bf 30.24 ± 0.90} & {\bf 42.05 ± 0.97} \\
\bottomrule 
\end{tabular}}
\end{center}}\end{minipage}
}
\hspace{1em}
\subfloat[
\textbf{Micro-F1(\%) and macro-F1(\%) of node classification on ogbn-mag.}
\label{table:mag}
]{
\begin{minipage}{0.3\linewidth}{\begin{center}
\adjustbox{max width=1\linewidth}{
\begin{tabular}{lcc}
\toprule 
Model & Micro-F1 & Macro-F1 \\
\midrule 
\texttt{MaskGAE}-edge & 32.87 ± 0.36 & 13.94 ± 0.16 \\
\texttt{MaskGAE}-path & 32.86 ± 0.34 & 14.13 ± 0.29 \\
\rowcolor{gray!25}
\texttt{Bandana} & {\bf 33.08 ± 0.46} & {\bf 14.49 ± 0.36} \\
\bottomrule 
\end{tabular}}
\end{center}}\end{minipage}
}
\end{table*}

\subsection{Experiments on OGB Datasets (ARQ3)}\label{E.3}

Here we provide the additional link prediction results on ogbl-collab\footnote{We only keep the papers published in 2010 and beyond for training.} and ogbl-ppa, as well as node classification on ogbn-mag\footnote{We only keep the ``paper-cites-paper'' relation for training.}. We keep the end-to-end training on ogbl-collab and ogbl-ppa (on which we observe that almost all models fail with dot-product probing, as it may be too hard to achieve on large-scale datasets). 
We report results of ogbl-collab, ogbl-ppa, and ogbn-mag in Table \ref{table:extraogb}(a-c) respectively. \texttt{Bandana} is observed to consistently outperform both configurations of \texttt{MaskGAE} on all of the OGB datasets, indicating its advantage of topological learning on large-scale networks.

\begin{table}
\caption{\small 
Micro-F1(\%) and Macro-F1(\%) of node classification on Wiki-CS. \textmd{Best results in each column are in {\bf bold}.}}
\label{table:wikics}
\begin{center}
\begin{tabular}{lcc}
\toprule 
Model & Micro-F1 & Macro-F1 \\
\midrule 
\texttt{GCA} & 78.19 ± 0.05 & 75.30 ± 0.08 \\
\rowcolor{gray!25}
\texttt{Bandana} & {\bf 78.85 ± 0.25} & {\bf 75.47 ± 0.35} \\
\bottomrule 
\end{tabular}
\end{center}
\end{table}

\begin{table}
\caption{\small 
Node classification accuracy(\%) under semi-supervised setting.}
\label{table:semi}
\begin{center}
\adjustbox{max width=1\linewidth}{
\begin{tabular}{llccc}
\toprule 
\multirow{2}{*}{Dataset} & \multirow{2}{*}{Model} & \multicolumn{3}{c}{Training data ratio} \\
\arrayrulecolor[RGB]{180, 180, 180}\cmidrule(){3-5}\arrayrulecolor[RGB]{0, 0, 0} 
& & 10\% (original) & 5\% & 1\% \\
\midrule 
& \texttt{MaskGAE}-edge & 92.29 ± 0.25 & 92.04 ± 0.11 & 88.61 ± 0.36 \\
\rowcolor{gray!25}
\cellcolor{white}\multirow{-2}{*}{CS} & 	
\texttt{Bandana} & 93.10 ± 0.05 & 92.85 ± 0.05 & 90.40 ± 0.24 \\
\arrayrulecolor[RGB]{180, 180, 180}\cmidrule(){2-5}\arrayrulecolor[RGB]{0, 0, 0} 
& \texttt{MaskGAE}-edge & 95.10 ± 0.04 & 94.92 ± 0.03 & 93.56 ± 0.18 \\
\rowcolor{gray!25}
\cellcolor{white}\multirow{-2}{*}{Physics} & \texttt{Bandana} & 95.57 ± 0.04 & 95.41 ± 0.04 & 94.34 ± 0.03 \\
\bottomrule 
\end{tabular}}
\end{center}
\end{table}

\subsection{Experiments on Wiki-CS (ARQ4)}\label{E.4}

Our bandwidth strategy has great potentiality in handling problems on a wide range of graph types, because bandwidth is a special kind of edge weight which is generally applicable. For example, \texttt{Bandana} is naturally compatible with directed graphs like Wiki-CS. We conduct an experiment on Wiki-CS directly on the model architecture (with 10\%:10\%:80\% data split) and show node classification results in Table \ref{table:wikics}. \texttt{Bandana} performs better than \texttt{GCA}, a contrastive framework providing native support for Wiki-CS, owing much to the local informativeness of bidirected and dispersive edge weights.

\subsection{Parameter Analyses (ARQ5)}\label{E.5}

\subsubsection{Effect of the temperature.}\label{E.5.1} As discussed in Section \ref{4.1}, the temperature $\tau$ of the Boltzmann-Gibbs distribution controls the continuity of the mask. 
Figure \ref{fig:temp} illustrates the node classification performance with different values of $\tau$, set as 1e-6, 0.1, 0.2, ..., 0.9, 1, 2, and 5. The extent of performance fluctuation w.r.t. temperature varies across datasets, as does the temperature range for the best accuracy, such as [0.8, 0.9] for Cora and [0.2, 1] for PubMed. However, it can be observed on the vast majority of datasets that the model performance declines if $\tau$ is too small (i.e. the discretized mask) or too large (i.e. the uniformized mask).

\begin{figure}
    \centering
    \includegraphics[width=1\columnwidth]{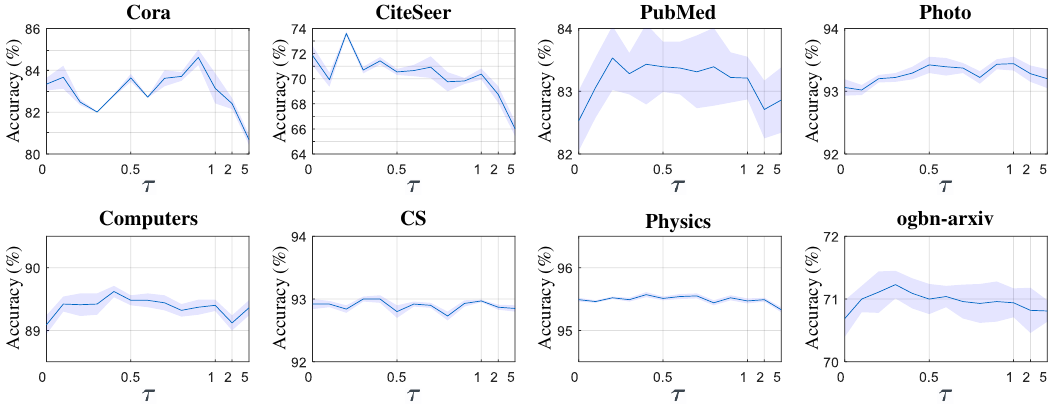}
    \caption{
    Node classification accuracy w.r.t. the temperature. 
    }
    \label{fig:temp}
\end{figure}

\begin{figure}
    \centering
    \includegraphics[width=1\columnwidth]{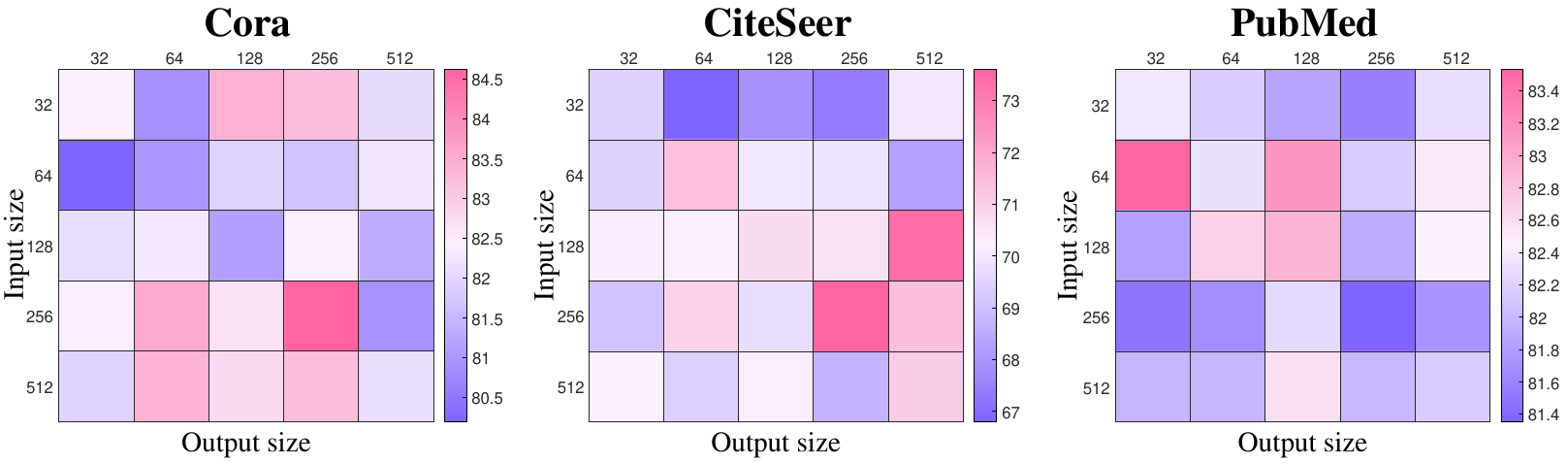}
    \caption{
    Average node classification accuracy w.r.t. the embedding size.
    }
    \label{fig:embsize}
\end{figure}

\subsubsection{Effect of embedding size.}\label{E.5.2}

We also analyze the effect of input and output embedding size of the decoder layers. Figure~\ref{fig:embsize} shows that \texttt{Bandana} does not necessarily benefit from a large embedding dimension, even if the dataset is relatively large, as the best input and output embedding size on PubMed is 64 and 32 respectively. On the other hand, CiteSeer prefers a larger embedding size because it benefits from a deeper encoder in our case, but setting a deeper encoder in a discrete \texttt{TopoRec} will instead lead to over-smoothing and a decrease in performance.

\subsection{The Semi-Supervised Setting (ARQ6)}\label{E.6}

We further investigate the performance of \texttt{Bandana} under the semi-supervised setting with scarce-labeled training data. We limit the node label ratio of CS and Physics (two datasets without official data splits) to 5\% and 1\% for downstream training and showcase the node classification accuracy in Table \ref{table:semi}. 
It's noteworthy that the performance of both \texttt{MaskGAE} and \texttt{Bandana} drops as the ratio of labeled training data decreases. However, \texttt{Bandana} still outperforms \texttt{MaskGAE} with scarce downstream supervision signals. 

\end{document}